\newtheorem{theorem}{Theorem}
\newtheorem{example}{Example}
\newtheorem{observation}{Observation}
\newtheorem{corollary}{Corollary}
\newtheorem{claim}{Claim}
\newtheorem{lemma}{Lemma}
\newtheorem{definition}{Definition}
\newcommand {\M}{\mathcal{M}}
\newcommand {\F}{\mathcal{F}}
\newcommand {\N}{\mathcal {N}}
\begin{document}

\title{Proportionally Fair Clustering}
\author{Xingyu Chen \qquad Brandon Fain\thanks{corresponding author: btfain at cs.duke.edu} \qquad Liang Lyu \qquad Kamesh Munagala \\ Department of Computer Science, Duke University}
\date{}

\maketitle

\begin{abstract}We extend the fair machine learning literature by considering the problem of proportional centroid clustering in a metric context. For clustering $n$ points with $k$ centers, we define fairness as proportionality to mean that any $n/k$ points are entitled to form their own cluster if there is another center that is closer in distance for all $n/k$ points. We seek clustering solutions to which there are no such justified complaints from any subsets of agents, without assuming any a priori notion of protected subsets. We present and analyze algorithms to efficiently compute, optimize, and audit proportional solutions. We conclude with an empirical examination of the tradeoff between proportional solutions and the $k$-means objective. \end{abstract}

\section{Introduction}
\label{sec:introduction}
The data points in machine learning are often real human beings. There is legitimate concern that traditional machine learning algorithms that are blind to this fact may inadvertently exacerbate problems of bias and injustice in society~\cite{propublica}. Motivated by concerns ranging from the granting of bail in the legal system to the quality of recommender systems, researchers have devoted considerable effort to developing fair algorithms for the canonical supervised learning tasks of classification and regression~\cite{awareness, kleinberg2, opportunity, kleinberg1, mistreatment, threshold, calibration, groupEnvyFree, subgroup, convex, fairLossMinimization}. 

We extend this work to a canonical problem in unsupervised learning: centroid clustering. In centroid clustering, we want to partition data into $k$ clusters by choosing $k$ ``centers'' and then matching points to one of the centers. This is a classic context for clustering work~\cite{kcenter, STA97, kMedianLP, kmedians}, and is perhaps best known as the setting for the celebrated $k$-means heuristic (independently discovered many times, see~\cite{clusteringSurvey} for a brief history). We provide a novel group based notion of fairness as proportionality, inspired by recent related work on the fair allocation of public resources~\cite{justifiedRepresentation, fairPublicDecisions, coreEC18, marketsPublicDecisions}. We suppose that data points represent the individuals to whom we wish to be fair, and that these agents prefer to be clustered accurately (that is, they prefer their cluster center to be representative of their features). A solution is fair if it respects the entitlements of groups of agents, where we assume that a subset of agents is entitled to choose a center for themselves if they constitute a sufficiently large fraction of the population with respect to the total number of clusters (\textit{e.g.}, $1/k$ of the population, if we are clustering into $k$ groups). The guarantee must hold for \textit{all} subsets of sufficient size, and therefore does not hinge on any particular a priori knowledge about which points should be protected. This is in line with other recent observations that information about which individuals should be protected may not be available in practice~\cite{fairLossMinimization}.

Consider a motivating example where proportional clustering might be preferable to more standard clusterings that try to minimize the $k$-means or $k$-median objective. Suppose there are 3 spherical clusters in the data: A, B, and C, and we are computing a 3-clustering. A, B, and C each contain one third of the total data. The radius of A is very large compared to the radii of B and C, and A is very far away from B and C compared to the radius of A. The radii of B and C are very small, and B and C are close relative to the radius of A. More simply, A is a large sphere very far away from two small spheres B and C, which are close together.

Simply placing centers at the middle of A, B, and C is proportional. However, the global k-means or k-median minimizer places 1 center for B and C to share, and uses the remaining 2 centers to cover A. Such a solution is arbitrarily not-proportional as the radii of B and C become arbitrarily small. Essentially, the global optimum forces B and C to share a center in order to pay for the high variance in A. 

To interpret this example, suppose we are clustering home locations to decide where to build public parks. B and C are dense urban centers, and A is a suburb. Minimizing total distance seems reasonable, but the global optimum builds 2 parks for A, and only 1 that B and C must share. Alternatively, A, B, and C might represent clusters of patients in a medical study. Both solutions distinguish A from B and C, but the global optimum obscures the secondary difference between B and C. In both instances, B or C could represent a protected group (e.g., home location may be racially divided, and race or sex could cause differences in medical data), in which case proportionality provides a guarantee even if we do not have access to this information.

\subsection{Preliminaries and Definition of Proportionality} We have a set $\N$ of $|\N| = n$ individuals or data points, and a set $\M$ of $|\M| = m$ feasible cluster centers. We will sometimes consider the important special case where $\M = \N$ (i.e., where one is only given a single set of points as input), but most of our results are for the general case where we make no assumption about $\M \cap \N$. For all $i, j \in \N \cup \M$, we have a distance $d(i, j)$ satisfying the triangle inequality. Our task is centroid clustering as treated in the classic $k$-median, $k$-means, and $k$-center problems. We wish to open a set $X \subseteq \M$ of $|X| = k$ centers (assume $|\M| \geq k$), and then match all points in $\N$ to their closest center in $X$. For a particular solution $X$ and agent $i \in \N$, let $D_i(X) = \min_{x \in X} d(i, x)$. In general, a good clustering solution $X$ will have small values of $D_i(X)$, although the aforementioned objectives differ slightly in how they measure this. In particular, the $k$-median objective is $\sum_{i \in \N} D_i(X)$, the $k$-means objective is $\sum_{i \in \N} \left(D_i(X)\right)^2 $, and the $k$-center objective is $\max_{i \in \N} D_i(X)$.

To define proportional clustering, we assume that individuals prefer to be closer to their center in terms of distance (i.e., ensuring that the center is more representative of the point). Any subset of at least $r \lceil \frac{n}{k} \rceil$ individuals is entitled to choose $r$ centers. We call a solution proportional if there does not exist any such sufficiently large set of individuals who, using the number of centers to which they are entitled, could produce a clustering among themselves that is to their mutual benefit in the sense of Pareto dominance. More formally, a $\textit{blocking coalition}$ is a set $S \subseteq \N$ of at least $r \lceil \frac{n}{k} \rceil$ points and a set $Y \subseteq \M$ of at most $r$ centers such that $D_i(Y) < D_i(X)$ for all $i \in S$. It is easy to see that because $D_i(X) = \min_{x \in X} d(i, x)$, this is functionally equivalent to Definition~\ref{def:core}; a larger blocking coalition necessarily implies a blocking coalition with a single center. 

\begin{definition}
\label{def:core}
	Let $X \subseteq \M$ with $|X| = k$. $S \subseteq N$ is a \textbf{blocking coalition} against $X$ if $|S| \geq \lceil \frac{n}{k} \rceil$ and $\exists y \in \M$ such that $\forall i \in S$, $d(i, y) < D_i(X)$. 	$X \subseteq \N$ is \textbf{proportional} if there is no blocking coalition against $X$. 
\end{definition}

Equivalently, 
$X$ is proportional if $\forall S \subseteq \N$ with $|S| \geq \lceil \frac{n}{k} \rceil$ and for all $y \in \M$, there exists $i \in S$ with $d(i, y) \geq D_i(X)$. It is important to note that this quantification is over \textit{all} subsets of sufficient size. Hence, in attempting to satisfy the guarantee for a particular subset $S$, one cannot simply consider a single $i \in S$ and ignore all of the other points, as $S \backslash \{i\}$ may itself be a subset to which the guarantee applies.

It is instructive to briefly consider an example. In Figure~\ref{fig:coreExample}, $\N = \M$, $k = 2$, and there are $12$ individuals, represented by the embedded points. Suppose we want to minimize the $k$-center objective in the pursuit of fairness: we would then choose the red points. However, this is not a proportional solution because the middle six points constitute half of the points, and would all prefer to be matched to the central blue point. Furthermore, choosing the blue point (and any other center) \textit{is} a proportional solution, because for any arbitrary group of six points and new proposed center, at least one of the six points will be closer to the blue point than the proposed center. 

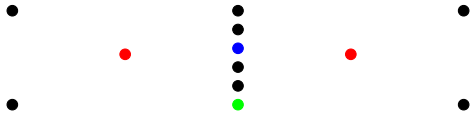
\begin{figure}[h]
\centering
\begin{tikzpicture}
	\filldraw (-3,-0) circle (2pt);
	\filldraw (-3,1.25) circle (2pt);
	\filldraw [green] (0,0) circle (2pt);
	\filldraw (0,0.25) circle (2pt);
	\filldraw (0,0.5) circle (2pt);
	\filldraw [blue] (0,0.75) circle (2pt);
	\filldraw (0,1) circle (2pt);
	\filldraw (0,1.25) circle (2pt);
	\filldraw (3,0) circle (2pt);
	\filldraw (3,1.25) circle (2pt);
	\filldraw [red] (-1.5,0.67) circle (2pt);
	\filldraw [red] (1.5,0.67) circle (2pt);
\end{tikzpicture}
\caption{Proportionality Example}
\label{fig:coreExample}
\end{figure} 

Proportionality has many advantages as a notion of fairness in clustering, beyond the intuitive appeal of groups being entitled to a proportional share of centers. We name a few of these advantages explicitly.
\begin{itemize}
	\item Proportionality implies (weak) \textit{Pareto optimality}: namely, for any proportional solution $X$, there does not exist another solution $X'$ such that $D_i(X') < D_i(X)$ for all $i \in \N$.
	\item Proportionality is \textit{oblivious} in the sense that it does not depend on the definition of sensitive attributes or protected sub-groups.
	\item Proportionality is \textit{robust} to outliers in the data, since only groups of points of sufficient size are entitled to their own center. 
	\item Proportionality is \textit{scale invariant} in the sense that a multiplicative scaling of all distances does not affect the set of proportional solutions.
	\item Approximately proportional solutions can be \textit{efficiently computed}, and one can optimize a secondary objective like $k$-median subject to proportionality \textit{as a constraint}, as we show in Section~\ref{sec:theory} and Section~\ref{sec:lp}.
	\item Proportionality can be \textit{efficiently audited}, in the sense that one does not need to compute the entire pairwise distance matrix in order to check for violations of proportionality, as we show in Section~\ref{sec:audit}.
\end{itemize}

In the worst case, proportionality is incompatible with all three of the classic $k$-center, $k$-means, and $k$-median objectives; {\em i.e.}, there exist instances for which any proportional solution has an arbitrarily bad approximation to all objectives.  We present such an instance in Example~\ref{example:impossibility}, and show in Section~\ref{sec:experiments} that this behavior also arises in real-world datasets.  

\begin{example} 
\label{example:impossibility}
There exist problems for which any proportional clustering has an unbounded approximation to the optimal $k$-center, $k$-means, and $k$-median objectives, and conversely, any clustering with bounded approximation to the optimum on these objectives is not proportional.  
\end{example}
\begin{proof}
The simplest example to see this has $\N = \M$, $n=6$, and $k=3$ (i.e., we want to choose 3 centers from six individuals, all of which are possible cluster centers). There are two points at position a, two at position b, and one each at positions c and d. The pairwise distances are given in the following matrix.
\begin{center}
  \begin{tabular}{| c || c | c | c | c |} 
  \hline
	& a & b & c & d \\ \hline \hline
	a & 0 & 1 & $\infty$ & $\infty$ \\ \hline
	b & 1 & 0 & $\infty$ & $\infty$ \\ \hline
	c & $\infty$ & $\infty$ & 0 & $\infty$ \\ \hline
	d & $\infty$ & $\infty$ & $\infty$ & 0 \\ \hline
\end{tabular}
\end{center}
Because $n=6$ and $k$=3, and there are $6/3 = 2$ points at $a$ and $b$, any proportional solution must include $a$ and $b$. Therefore, one of the points at $c$ or $d$ will have an arbitrarily large value $D_i(X)$. The optimal solution on any of the three objectives is to instead choose $c,d,$ and one of $a$ or $b$.
\end{proof}

Furthermore, as we show in Section~\ref{sec:theory} and observe empirically in Section~\ref{sec:experiments}, proportional solutions may not always exist. We therefore consider the natural approximate notion of proportionality that relaxes the Pareto dominance condition by a multiplicative factor.

\begin{definition}
	$X \subseteq \M$ with $|X| = k$ is $\rho$-\textbf{approximate proportional} (hereafter $\rho$-proportional) if $\forall S \subseteq \N$ with $|S| \geq \lceil \frac{n}{k} \rceil$ and for all $y \in \M$, there exists $i \in S$ with $\rho \cdot d(i, y) \geq D_i(X)$.
\end{definition}

To parse the definition, again consider Figure~\ref{fig:coreExample}. Although choosing the red points is not a proportional solution, it is an approximate proportional solution. To see this, suppose the middle six agents wish to deviate to the blue point as before. The green agent would decrease it's distance to a center by deviating, but not by more than a constant factor, say $3$, so the red points would constitute a 3-proportional solution. Note that even with this notion, it remains true that any approximately proportional clustering is incompatible with any approximately optimal clustering on the $k$-center, $k$-means, and $k$-median objectives, in the worst case.


\subsection{Results and Outline} 
In Section~\ref{sec:theory} we show that proportional solutions may not always exist. In fact, one cannot get better than a $2$-proportional solution in the worst case. In contrast, we give a greedy algorithm (Algorithm~\ref{alg:ball}) and prove Theorem~\ref{thm:ballGrowing}: The algorithm yields a $\left( 1 + \sqrt{2} \right)$-proportional solution in the worst case. 

In Section~\ref{sec:lp}, we treat proportionality as a constraint and seek to optimize the $k$-median objective subject to that constraint. We show how to write approximate proportionality as $m$ linear constraints. Incorporating this into the standard linear programming relaxation of the $k$-median problem, we show how to use the rounding from~\cite{kMedianLP} to find an $O(1)$-proportional solution that is an $O(1)$-approximation to the $k$-median objective of the optimal proportional solution.

In Section~\ref{sec:audit}, we show that proportionality is approximately preserved if we take a random sample of the data points of size $\tilde{O}(k^3)$, where the $\tilde{O}$ hides low order terms. This immediately implies that for constant $k$, we can check if a given clustering is proportional as well as compute approximately proportional solutions in near linear time, comparable to the time taken to run the classic $k$-means heuristic.

In Section~\ref{sec:experiments}, we provide a local search heuristic that efficiently searches for a proportional clustering. Our heuristic is able to consistently find nearly proportional solutions in practice. We test our heuristic and Algorithm~\ref{alg:ball} empirically against the celebrated $k$-means heuristic in order to understand the tradeoff between proportionality and the $k$-means objective. We find that the tradeoff is highly data dependent: Though these objectives are compatible on some datasets, there exist others on which these objectives are in conflict. 



\subsection{Related Work}
\textbf{Unsupervised Learning.} Metric clustering is a well studied problem. There are constant approximation polynomial time algorithms for both the $k$-median~\cite{kMedianPrimalDual, kMedianLP, kmedians, mettuPlaxton, BPRST17} and $k$-center objective~\cite{kcenter, STA97}.  Proportionality is a constraint on the {\em centers} as opposed to the data points; this makes it difficult to adapt standard algorithmic approaches for $k$-medians and $k$-means such as local search~\cite{kmedians}, primal-dual~\cite{kMedianPrimalDual}, and greedy dual fitting~\cite{kMedianApprox}. For instance, our greedy algorithm in Section~\ref{sec:theory} grows balls around potential centers, which is very different from how balls are grown in the primal-dual schema~\cite{kMedianPrimalDual, mettuPlaxton}. Somewhat surprisingly, in Section~\ref{sec:theory} we show that for the problem of minimizing the $k$-median objective subject to proportionality as a constraint, we can extend the linear program rounding technique of~\cite{kMedianLP} to get a constant approximation algorithm. However, the additional constraints we add in the linear program formulation render the primal-dual and other methods inapplicable.

In~\cite{fairlets} and subsequent generalizations~\cite{RS45, BCN19}, the authors consider fair clustering in terms of balance: There are red and blue points, and a balanced solution has roughly the same ratio of blue to red points in every cluster as in the overall population. The authors are motivated to extract features that cannot discriminate between status in different groups. This ensures that subsequent regression or classification on these features will be fair between these groups. In contrast, we assume that our data points prefer to be accurately clustered, and that an unfair solution provides accurate clusters for some groups, while giving other large groups low quality clusters. Finally, we note that there is a line of work in fair unsupervised learning concerned with constructing word embeddings that avoid bias~\cite{manWoman, biasScience}, but these problems seem orthogonal to our concerns in clustering.

\textbf{Supervised Learning.} The standard model in fair supervised learning~\cite{awareness, kleinberg2, kleinberg1, groupEnvyFree, mistreatment} has a set of \textit{protected agents} given as input to an algorithm which must classify agents into a positive and negative group. Most of these notions of fairness do not apply in any natural way to unsupervised learning problems. Our work further differs from the supervised learning literature in that we do not assume information about which agents are to be protected. Instead, we provide a fairness guarantee to arbitrary groups of agents, including protected groups even if we do not know their identity, similar to the ideas considered in~\cite{subgroup} and~\cite{fairLossMinimization}.

\textbf{Fair Resource Allocation.} Our notion of proportionality is derived from the notion of core in economics~\cite{scarfCore, lindahlCore}. The core has been adapted as a natural generalization to groups of the idea of fairness as proportionality~\cite{coreWINE16, coreEC18}, similar to other group fairness concepts for public goods that explicitly consider shared resources~\cite{fairPublicDecisions, justifiedRepresentation}. In clustering, the public goods are the centers themselves, and the ``agents'' are the data points, which share the centers. The fair clustering problem differs in that it is framed in terms of costs instead of positive utility, and agents only care about their most preferred good. That is, an agent's cost for a clustering solution is just the distance to the closest center, as opposed to much of the previous resource allocation literature where agents have additive utility across the allocated goods. One can interpret our work as results for computing the core for a resource allocation problem where agents have a min-cost function with respect to allocations.


\section{Existence and Computation of Proportional Solutions} 
\label{sec:theory}
We begin with a negative result: in the worst case, there may not be an exact proportional solution. The impossibility remains even in the special case when $\N = \M$. The latter construction is slightly more involved; we begin with the arbitrary $\N$ and $\M$ setting. The basic idea behind both constructions is to create two groups of points very far away from one another with $k=3$, ensuring that one group will be served by only one center. 

\begin{claim}
\label{claim:lowerBound}
	For all $\rho < 2$, a $\rho$-proportional solution is not guaranteed to exist.
\end{claim}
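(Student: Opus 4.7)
The plan is to prove the claim by exhibiting an explicit instance with $k = 3$ for which every candidate clustering admits a blocking coalition with improvement factor at least $2$, so that no $\rho$-proportional clustering can exist when $\rho < 2$. The construction follows the idea in the paragraph just preceding the claim: place two groups of data points at very large distance from one another.

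The first step is a far-apart argument. For any candidate $X \subseteq \M$ with $|X| = 3$, at least one center must be placed near each group: otherwise the uncovered group, which has at least $\lceil n/k \rceil$ points, forms a blocking coalition by choosing $y \in \M$ near its own region, and because the inter-group distance can be made arbitrarily large the resulting improvement factor tends to infinity and exceeds any fixed $\rho$. With $k = 3$, pigeonhole then gives that one of the two groups, call it $A$, receives at most one center.

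Next I design $A$ so that any single-center placement admits a factor-$2$ blocking coalition. Concretely, I place $A$ as two equal sub-clusters of $\lceil n/k \rceil$ data points each, at positions $a_1, a_2$ with $d(a_1, a_2) = L$. For any single center $c$ serving $A$, the triangle inequality yields $d(a_1, c) + d(a_2, c) \geq L$, so $\max\{d(a_1, c), d(a_2, c)\} \geq L/2$. The farther sub-cluster is already an entitled subgroup of size exactly $\lceil n/k \rceil$, and it forms a blocking coalition by moving to an alternative $y \in \M$ placed close to that sub-cluster. If I populate $\M$ with alternatives at distance $\approx L/4$ from each $a_i$, the blocking improvement becomes $(L/2)/(L/4) = 2$, which strictly exceeds any $\rho < 2$.

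The main obstacle is the detailed case analysis: I must ensure that \emph{every} $3$-subset of $\M$ admits such a factor-$2$ blocking coalition, including the symmetric configurations in which $B$ (rather than $A$) is the group receiving only one center. This forces $B$ to carry an analogous structure, and the construction must be balanced so that the total counts of sub-clusters across both groups remain consistent with $|\N| = n$ and $\lceil n/k \rceil$. The $\N = \M$ variant mentioned in the preceding paragraph is more involved because the alternative $y$ in any blocking coalition must itself be a data point in $\N$; handling this requires placing additional data points at the required alternative positions and verifying that their presence does not accidentally restore a proportional clustering via those new points used as centers.
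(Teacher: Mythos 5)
Your high-level strategy is the same as the paper's: two groups at infinite distance with $k=3$, so that some group is served by a single center, plus a within-group gadget showing any single center admits a factor-$2$ blocking coalition. The far-apart step and the pigeonhole step are fine. But the within-group gadget you propose --- two \emph{disjoint} sub-clusters, each of size exactly $\lceil n/k\rceil$, each able to block on its own --- cannot be realized, and this is not the bookkeeping detail you defer to at the end; it kills the construction. Each group built this way needs $2\lceil n/3\rceil \geq 2n/3$ points, and you need at least two such groups (otherwise the adversary gives the structured group two centers, one per sub-cluster, and everyone is happy), for a total of at least $4n/3 > n$ points. More generally, any gadget whose blocking coalitions are pairwise disjoint entitled sets runs into the same obstruction: forcing some group to receive at most one center requires more groups than the point budget allows. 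A symptom of the inconsistency: if your instance really existed, the farther sub-cluster would sit at distance at least $3L/4$ from the chosen center (since the center must itself be one of your two alternatives), giving an unavoidable improvement factor of $3$, which would contradict the paper's Theorem~\ref{thm:ballGrowing} guaranteeing a $(1+\sqrt{2})$-proportional solution always exists.

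The paper's resolution is to make the entitled coalitions \emph{overlap}. Each group has three agent locations $a_1,a_2,a_3$ and three candidate centers $x_1,x_2,x_3$ with the cyclic distances $d(a_i,x_i)=4$, and distances $1$ and $2$ to the other two centers arranged in a rotationally symmetric ("rock--paper--scissors") pattern. With $n=6$ and $k=3$ the entitled size is $2$, so the three pairs $\{a_1,a_2\},\{a_2,a_3\},\{a_1,a_3\}$ are all potential coalitions while the group contains only $3$ agents rather than $2\lceil n/k\rceil = 4$; whichever single center is opened, two of the three agents each improve by exactly a factor of $2$ by deviating to one other center. If you want to repair your write-up, replace the two-sub-cluster gadget with such a cyclic gadget; your observation that the $\N=\M$ case needs extra care is correct (the paper handles it separately with $k=5$, three clusters, and only obtains a $1.5$ lower bound there).
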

\begin{proof}
	Consider the following instance with $\N = \{a_1,  a_2, \hdots, a_6\}$, $\M = \{x_1, x_2, \hdots, x_6\}$ and $k=3$. Distances are specified in the following table.
\begin{center}
  \begin{tabular}{|c || c | c | c | c | c | c  |} 
  \hline 
         & $x_1$   & $x_2$  & $x_3$    & $x_4$  & $x_5$   & $x_6$   \\ \hline \hline
  $a_1$ &  4     & 1 & 2    & $\infty$ & $\infty$ & $\infty$  \\ \hline
  $a_2$ & 2    &   4    & 1  & $\infty$ & $\infty$ & $\infty$  \\ \hline  
  $a_3$ & 1  &  2  &   4     & $\infty$ & $\infty$ & $\infty$  \\ \hline
  $a_4$ & $\infty$ & $\infty$ & $\infty$ &    4    & 1 &  2    \\ \hline
  $a_5$ & $\infty$ & $\infty$ & $\infty$ &   2  &    4   & 1   \\ \hline
  $a_6$ & $\infty$ & $\infty$ & $\infty$ & 1  &  2  &   4       \\ \hline
\end{tabular}
\end{center}

Notice that the data is separate into two areas. Since $k=3$, in a feasible solution, we only open one center in one of these two areas. Without loss of generality, suppose that we open exactly one center among $\{x_1, x_2, x_3\}$. The instance is symmetric, so again suppose without loss of generality that we open $x_1$. Then consider the individuals in $\{a_1, a_2\}$. This coalition is of size $\lceil \frac{n}{k} \rceil = 2$, and both individuals would reduce their distance by a factor of $2$ by switching to $x_3$. Thus, any solution is only 2-proportional. 
\end{proof}

\begin{claim}
	In the special case where $\N = \M$, for all $\rho < 1.5$, a $\rho$-proportional solution is not guaranteed to exist.
\end{claim}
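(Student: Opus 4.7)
The plan is to construct an explicit metric instance with $\N = \M$ in which every $k$-clustering admits a blocking coalition whose improvement factor strictly exceeds $\rho$, for any $\rho < 3/2$. I would mirror the two-group template of Claim~\ref{claim:lowerBound}: place points into groups separated by effectively infinite inter-group distances so that cross-group centers are useless, and choose $k$ so that pigeonhole forces at least one group to receive fewer centers than it needs. The weakening of the achievable ratio from $2$ in Claim~\ref{claim:lowerBound} down to $3/2$ here reflects the constraint that, under $\N = \M$, the deviation target $y$ must itself be an input point, so its pairwise distances to the coalition members are governed by the same triangle inequality as every other input point; this limits the simultaneous improvement that any single $y$ can deliver to multiple coalition members.

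The within-group subproblem is to design a small metric on $g$ points such that, for every possible single opening $z$, some other input point $y$ serves as a blocking target, i.e.\ $\rho \cdot d(i, y) < d(z, i)$ for every remaining point $i$ in the group. For a group of only three collinear points this is impossible, because the ``middle'' point, if opened, leaves the other two unable to improve via any of the three input points; and, more generally, chaining the blocking-ratio condition $\rho \cdot d(y, k) < d(z, k)$ for two different choices of $z$ quickly yields the constraint $\rho^2 < 1$ whenever the group has too few points. I would therefore work with enlarged groups of at least four points and tune the pairwise distances according to a pattern whose critical blocking ratio approaches $3/2$ from below as an auxiliary parameter $\epsilon \to 0$; for any fixed $\rho < 3/2$, choosing $\epsilon$ small enough guarantees strict blocking for every candidate opening.

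The main obstacle is precisely in constructing the within-group metric so that every single-center choice simultaneously admits a blocking coalition, without violating triangle inequalities. This requires a delicate balancing act: no single input point may be ``central'' enough to the others to be a good center, yet each must be far enough from some subset to permit blocking when the others are opened. My proposed approach is to introduce auxiliary input points within each group that act as dedicated blocking targets for openings which would otherwise be proportional, setting their distances near the triangle-inequality boundary. Once the within-group construction is verified, the remaining argument is a routine case analysis over the possible distributions of the $k$ centers between the two groups: an entirely uncovered group is trivially blocked, and any group receiving fewer centers than its share is handled by the within-group result, while heavily loaded groups contain no valid coalition of non-centers of size $\lceil n/k \rceil$ and so impose no extra conditions.
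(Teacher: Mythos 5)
Your high-level strategy matches the paper's: separate the data into far-apart groups, use pigeonhole on $k$ to force one group down to a single center, and design a within-group metric in which every single-center choice admits a blocking coalition improving by a factor approaching $3/2$, using a mix of heavy co-located ``coalition'' positions and light ``dedicated target'' positions. However, the proposal has a genuine gap: the entire content of the claim is the explicit counterexample, and you never exhibit the within-group metric. You correctly identify its construction as ``the main obstacle'' and then describe only a search plan (``enlarged groups of at least four points,'' ``tune the pairwise distances,'' ``near the triangle-inequality boundary''), without verifying that any such metric exists, satisfies the triangle inequality, and blocks \emph{every} candidate center. For an impossibility claim established by example, this is the proof, not a detail to be filled in later. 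For reference, the paper's within-group gadget has six positions: singletons $x_1,x_2,x_3$ and three co-located blocks $a_1,a_2,a_3$ of $100$ points each, with the cyclic distances $d(a_i,x_j)\in\{1,2,4\}$ arranged so that opening any $x_j$ lets two $a$-blocks improve by a factor of $2$ via another $x$, and opening any $a_j$ (distance $3$ to the other two blocks via shortest paths) lets the other two blocks improve by a factor of $1.5$ via some $x$; all distances are realized as shortest paths in a weighted bipartite graph, so the triangle inequality holds for free.

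A second, quieter gap is the coalition-size bookkeeping, which you do not address. The deviating coalition consists of two heavy positions inside the deprived group, so you must arrange the group sizes and $k$ so that two positions' worth of points is at least $\lceil n/k\rceil$. With your two-group, $k=3$ template, each group holds $n/2$ points and the threshold is $\lceil n/3\rceil$, i.e.\ two of the three heavy blocks must carry at least $2/3$ of the group --- achievable but exactly tight and sensitive to the mass you give the singleton targets and to ceilings. The paper sidesteps this by using three groups of $303$ points with $k=5$, so that $200$ co-located deviators comfortably exceed $\lceil 909/5\rceil=182$. Your closing case analysis is otherwise fine in substance (one blocking coalition suffices, so the heavily loaded groups are irrelevant), but without the explicit gadget and the size check the argument does not yet establish the claim.
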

\begin{proof}
	Let $k=5$. There are three identical clusters of 303 points each (so $n = 909$). The pairwise distance between any two points in different clusters is $\infty$. Construct each cluster as follows. There are six types of points, $a_1, a_2, a_3, x_1, x_2, x_3$ (all feasible, since $\M = \N$). There is exactly one point of type $x_1$, one point of type $x_2$, and one point of type $x_3$. There are 100 points each of type $a_1$, $a_2$, and $a_3$ (that is, there are 100 points co-located at each position). The pairwise distance between points in a cluster of given types is specified in the following table. The pairwise distance between any two points of types in $\{x_1, x_2, x_3\}$ is equal to the distance between any two points of types in $\{a_1, a_2, a_3\}$ is equal to 3, which follows from the shortest path distances on a weighted bipartite graph with weights defined by the table. 
		
\begin{center}
  \begin{tabular}{|c || c | c | c |} 
  \hline 
         & $x_1$ & $x_2$ & $x_3$ \\ \hline \hline
  $a_1$ & 4     &  1        & 2         \\ \hline
  $a_2$ & 2     &  4        & 1         \\ \hline  
  $a_3$ & 1     &  2        & 4         \\ \hline
\end{tabular}
\end{center}

Since $k = 5$ and there are three clusters, in a feasible solution there is a cluster in which we choose at most one center. In that cluster, suppose first that we choose a center of type $x_1$, $x_2$, or $x_3$. Since the instance is symmetric with respect to this choice, suppose without loss of generality that we choose $x_1$. Then the 200 points of types $a_1$ and $a_2$ could decrease their distance by a factor of $2$ by switching to $x_3$. Since any $\lceil 909/5 \rceil = 182$ points are entitled to deviate, such a choice of $x_1$ is not $\rho$-proportional for $\rho < 2$. 

Instead, suppose that we choose a center of type $a_1$, $a_2$, or $a_3$. The instance is again symmetric with respect to this choice, so suppose without loss of generality that we choose $a_1$. Then the 200 points of types $a_2$ and $a_3$ could decrease their distance by a factor of 1.5 by switching to $x_1$. Thus, in either case, the solution is not $\rho$-proportional for any $\rho < 1.5$. 
\end{proof}

\subsection{Computing a $ \left(1+\sqrt{2} \right)$-Approximate Proportional Clustering}
Claim~\ref{claim:lowerBound} establishes that we should focus our attention on designing an efficient approximation algorithm. We give a simple and efficient algorithm that achieves a $\left( 1 + \sqrt{2} \right)$-proportional solution, very close to the existential lower bound of 2. For notational ease, let $B(x, \delta) = \{i \in \N: \; d(i, x) \leq \delta\}$. That is, $B(x, \delta)$ is the ball (defined on $\N$) of distance $\delta$ about center $x$. For simplicity of exposition, we present Algorithm~\ref{alg:ball} as a \textit{continuous} algorithm where a $\delta$ parameter is smoothly increasing. The algorithm can be easily discretized using priority queues. 

\begin{algorithm}
\begin{algorithmic}[1]
\caption{Greedy Capture}
\label{alg:ball}
	\STATE $\delta \gets 0$; $X \gets \emptyset$; $N \gets \N$
	\WHILE{$N \neq \emptyset$}
		\STATE Smoothly increase $\delta$
		\WHILE{$\exists x \in X$ s.t. $|B(x, \delta) \cap N| \geq 1$}
			\STATE $N \gets N \backslash B(x, \delta)$
		\ENDWHILE
		\WHILE{$\exists x \in ( \M \backslash X)$ s.t. $|B(x, \delta) \cap N| \geq \lceil \frac{n}{k} \rceil$}
			\STATE $X \gets X \cup \{x\}$
			\STATE $N \gets N \backslash B(x, \delta)$
		\ENDWHILE
	\ENDWHILE
	\STATE return X
\end{algorithmic}
\end{algorithm}

Algorithm~\ref{alg:ball} runs in $\tilde{O}(mn)$ time.\footnote{To state running times simply, we use the convention that $f(n)$ is $\tilde{O}(g(n))$ if $f(n)$ is $O(g(n))$ up to poly-logarithmic factors.} In essence, the algorithm grows balls continuously around the centers, and when the ball around a center has ``captured'' $\lceil \frac{n}{k} \rceil$ points, we greedily open that center and disregard all of the captured points. Open centers continue to greedily capture points as their balls continue to expand. Though~\cite{kMedianPrimalDual,mettuPlaxton} similarly expand balls about points to compute approximately optimal solutions to the $k$-median problem, there is a crucial difference: They grow balls around data points rather than centers. 

\begin{theorem}
\label{thm:ballGrowing}
	Algorithm~\ref{alg:ball} yields a $(1+\sqrt{2})$-proportional clustering, and there exists an instance for which this bound is tight.
\end{theorem}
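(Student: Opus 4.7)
My plan is to prove both statements through a single triangle-inequality argument around a hypothetical blocking coalition. Let $X$ denote the output of Algorithm~\ref{alg:ball}, and suppose for contradiction that $(S,y)$ is a blocking coalition at $\rho=1+\sqrt{2}$, meaning $\rho\,d(i,y)<D_i(X)$ for every $i\in S$. The case $y\in X$ is immediate since then $D_i(X)\le d(i,y)$ contradicts blocking, so I assume $y\notin X$. Set $\delta^*=\max_{i\in S}d(i,y)$, attained at some $i^*\in S$.

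For the upper bound, I would reason about why $y$ was never opened. Since $S\subseteq B(y,\delta^*)\cap\N$ and $|S|\ge\lceil n/k\rceil$, if no point of $S$ had been captured by the time $\delta$ reaches $\delta^*$, then $|B(y,\delta^*)\cap N|\ge\lceil n/k\rceil$ at that instant and the inner loop of Algorithm~\ref{alg:ball} would open $y$; contradiction. So some $i_1\in S$ is captured at time $\delta_1\le\delta^*$ by an opened center $x_1\in X$, and $d(i_1,x_1)\le\delta_1$. Applying the blocking inequality at $i_1$ yields $\rho\,d(i_1,y)<d(i_1,x_1)\le\delta_1$, so $d(i_1,y)<\delta_1/\rho$. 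The triangle inequality then gives
\begin{equation*}
D_{i^*}(X)\le d(i^*,x_1)\le d(i^*,y)+d(y,i_1)+d(i_1,x_1)<\delta^*+\tfrac{\delta_1}{\rho}+\delta_1\le\delta^*\bigl(2+\tfrac{1}{\rho}\bigr),
\end{equation*}
and combining with blocking at $i^*$ (namely $\rho\delta^*<D_{i^*}(X)$) forces $\rho^2-2\rho-1<0$, i.e.\ $\rho<1+\sqrt{2}$, a contradiction.

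For tightness I would exhibit a one-dimensional instance in which every inequality above is nearly tight. On a line, place $\lceil n/k\rceil=2$ co-located data points at $x_1=0$ (so $x_1$ opens at $\delta=0$), one data point at $i_1$ with $d(i_1,x_1)=1-\eta$, and one data point at $i^*$ with $d(i_1,y)=\tfrac{1}{\rho}-\varepsilon$, $d(y,i^*)=1$, and hence $d(i^*,x_1)=2+\tfrac{1}{\rho}-\eta-\varepsilon=\rho-\eta-\varepsilon$, for small $\eta,\varepsilon>0$. Pad with $k-2$ far-away co-located clusters of size $2$ so that $n=2k$ and the algorithm opens $k-1$ other centers but never $y$: at $\delta=1-\eta$ the ball of $x_1$ captures $i_1$, and at $\delta=1$ one has $B(y,1)\cap N=\{i^*\}$, one short of $\lceil n/k\rceil=2$. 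Then $\{i_1,i^*\}$ is a blocking coalition at every $\rho'<1+\sqrt{2}$ sufficiently close to $1+\sqrt{2}$, since the ratios $D_{i^*}(X)/d(i^*,y)=\rho-\eta-\varepsilon$ and $D_{i_1}(X)/d(i_1,y)=(1-\eta)/(\tfrac{1}{\rho}-\varepsilon)$ both tend to $1+\sqrt{2}$ as $\eta,\varepsilon\to 0$. The main obstacle is that in the tight regime $y$ is strictly closer to $i_1$ than $x_1$ is, so one cannot argue $x_1$ captures $i_1$ by proximity; the resolution is to open $x_1$ at $\delta=0$ through its own co-located cluster and to keep $y$ sufficiently isolated that it never sees two uncaptured points simultaneously.
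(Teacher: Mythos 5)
Your upper-bound argument is correct and is essentially the paper's own proof: the same dichotomy (either no point of $S$ is captured by the time $\delta$ reaches $\max_{i\in S}d(i,y)$, which forces $y$ to open, or some captured $i_1$ together with the farthest point $i^*$ yields the two-leg triangle-inequality bound), with the paper packaging the final step as $\max_{z\ge 0}\min\left(z,\,2+1/z\right)=1+\sqrt{2}$ where you solve $\rho<2+1/\rho$ directly. Your tightness instance is a different construction from the paper's six-point, four-center example (one-dimensional with co-located padding clusters), but it is correctly engineered so that $y$ never sees two uncaptured points and both deviation ratios tend to $1+\sqrt{2}$.
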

\begin{proof}
	Let $X$ be the solution computed by Algorithm~\ref{alg:ball}. First note that $X$ uses at most $k$ centers, since it only opens a center when $\lceil \frac{n}{k} \rceil$ \textit{unmatched} points are absorbed by the ball around that center, and this can happen at most $k$ times. Now, suppose for a contradiction that $X$ is not a $(1+\sqrt{2})$-proportional clustering. Then there exists $S \subseteq \N$ with $|S| \geq \lceil \frac{n}{k} \rceil$ and $y \in \M$ such that \begin{equation} \label{eq:contradiction} \forall i \in S, \; (1+\sqrt{2}) \cdot d(i, y) < D_i(X).\end{equation} 

Let $r_y$ be the distance of the farthest agent from $y$ in $S$, that is, $r_y := \max_{i \in S} d(i, y)$, and call this agent $i^*$. There are two cases. In the first case, $B(x, r_y) \cap S = \emptyset$ for all $x \in X$. This immediately yields a contradiction, because it implies that Algorithm 1 would have opened $y$. In particular, note that $S \subseteq B(y, r_y)$, so if $S \cap B(x, r_y) = \emptyset$ for all $x \in X$, then $B(y, r_y)$ would have had at least $\lceil \frac{n}{k} \rceil$ unmatched points.

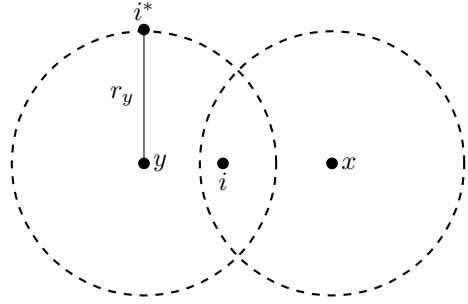
\begin{figure}[!h]
\centering
\begin{tikzpicture}
	\filldraw[black] (-1.25,0) circle (2pt) node[anchor=west] {$y$};
	\draw[black, thick, dashed] (-1.25,0) circle (50pt);
	
	\filldraw[black] (1.25,0) circle (2pt) node[anchor=west] {$x$};
	\draw[black, thick, dashed] (1.25,0) circle (50pt);
	
	\filldraw[black] (-1.25,1.78) circle (2pt) node[anchor=south] {$i^*$};
	\draw (-1.25,0) -- (-1.25, 1.78);
	\filldraw[black] (-1.25,0.89) circle (0pt) node[anchor=east] {$r_y$};
	
	\filldraw[black] (-0.2,0) circle (2pt) node[anchor=north] {$i$};
	
	\end{tikzpicture}
\caption{Diagram for Proof of Theorem~\ref{thm:ballGrowing}}
\label{fig:ballProof}
\end{figure}

In the second case, $\exists x \in X$ and $\exists i \in N$ such that $i \in B(x, r_y) \cap S$. This case is drawn below in Figure~\ref{fig:ballProof}. By the triangle inequality, $d(x, y) \leq d(i, x) + d(i, y)$. Therefore, $d(i^*, x) \leq r_y + d(i, x) + d(i, y)$. Also, $d(i, x) \leq r_y$, since $i \in B(x, r_y)$. Consider the minimum multiplicative improvement of $i$ and $i^*$:

\begin{equation*}
\begin{split}
	&\min\left( \frac{d(i, x)}{d(i, y)}, \; \frac{d(i^*, x)}{d(i^*, y)} \right) \\
	& \leq \min\left( \frac{d(i, x)}{d(i, y)}, \; \frac{r_y + d(i, x) + d(i, y)}{r_y} \right) \\
	& \leq \min\left( \frac{r_y}{d(i, y)}, \; 2 + \frac{d(i, y)}{r_y} \right) \\
	& \leq \max_{z \geq 0} \left( \min \left( z,\; 2+1/z\right) \right) = 1 + \sqrt{2} 
\end{split}
\end{equation*}
which violates equation~\ref{eq:contradiction}. 

It is not hard to show that there exists an instance for which Algorithm~\ref{alg:ball} yields exactly this bound. Consider the following instance with $\N = \{a_1,  a_2, \hdots, a_6\}$, $\M = \{x_1, x_2, x_3, x_4\}$ and $k=3$. Distances are specified in the following table, where $\epsilon > 0$ is some small constant.
\begin{center}
  \begin{tabular}{|c || c | c | c | c | c | c  |} 
  \hline 
             & $x_1$          & $x_2$       & $x_3$          & $x_4$             \\ \hline \hline
  $a_1$ & 1                  & $1+\sqrt{2}$ & $\infty$        & $\infty$         \\ \hline
  $a_2$ & $\sqrt{2}-1$ &  $1-\epsilon$ & $\infty$        & $\infty$         \\ \hline  
  $a_3$ & $1+\sqrt{2}$ &  $1-\epsilon$ & $\infty$        & $\infty$          \\ \hline
  $a_4$ & $\infty$         & $\infty$     & 1                  &   $1+\sqrt{2}$ \\ \hline
  $a_5$ & $\infty$         & $\infty$     & $\sqrt{2}-1$  &   $1-\epsilon$ \\ \hline
  $a_6$ & $\infty$         & $\infty$     & $1+\sqrt{2}$ & $1-\epsilon$  \\ \hline
\end{tabular}
\end{center}
The distances satisfy the triangle inequality. Note that Algorithm~\ref{alg:ball} will open $x_2$ and $x_4$. The coalition $\{a_1, a_2\}$ can each reduce their distance by a multiplicative factor approaching $1+\sqrt{2}$ as $\epsilon \rightarrow 0$ by deviating to $x_1$.
\end{proof}

\subsection{Local Capture Heuristic}
We observe that while our Greedy Capture algorithm (Algorithm~\ref{alg:ball}) always produces an approximately proportional solution, it may not produce an exactly proportional solution in practice, even on instances where such solutions exist (see Figure~\ref{fig:IrisRho} and Figure~\ref{fig:DiabetesRho}). We therefore introduce a Local Capture heuristic for searching for more proportional clusterings. Algorithm~\ref{alg:local} takes a target value of $\rho$ as a parameter, and proceeds by iteratively finding a center that violates $\rho$-fairness and swapping it for the center in the current solution that is least demanded.

\begin{algorithm}
\begin{algorithmic}[1]
\caption{Local Capture Heuristic}
\label{alg:local}
	\INPUT $\rho$
	\STATE Initialize $X$ as a random subset of $k$ centers from $\M$.
	\REPEAT
	\FOR{$y \in \M$}
		\STATE $S_y \gets \{i \in N: \rho \cdot d_{iy} < D_i(X)\}$
		\IF{$|S_y| \geq \lceil \frac{n}{k} \rceil$} 
			\STATE $x^* \gets \mbox{argmin}_{x \in X} | \{i \in N: d_{ix} = D_i(X) \}|$
			\STATE $X \gets (X \backslash \{x^*\}) \cup \{y\}$
		\ENDIF
	\ENDFOR
	\UNTIL{no changes occur}
	\STATE return X
\end{algorithmic}
\end{algorithm}

Every iteration of Algorithm~\ref{alg:local} (the entire inner for loop) runs in $\tilde{O}(mn^2)$ time. There is no guarantee of convergence (for a given input $\rho$, there may not even exist a $\rho$-proportional solution), but if Algorithm~\ref{alg:local} terminates, then it returns a $\rho$-proportional solution. In our experiments (see Section~\ref{sec:experiments}), we search for the minimum $\rho$ for which the algorithm terminates in a small number of iterations via binary search over possible input of $\rho$. In~\cite{kmedians}, the authors also evaluate a local search swapping procedure for the $k$-median problem, but their swap condition is based on the relative $k$-median objective of two solutions, whereas our swap condition is based on violations to proportionality.

\section{Proportionality as a Constraint}
\label{sec:lp}
One concern with the previous algorithms is that they may find a  proportional clustering with poor global objective (e.g., $k$-median), even when exact proportional clusterings with good global objectives exist. For example, suppose $k=2$ and there are two easily defined clusters, containing $40\%$ and $60\%$ of the data respectively. It is possible that Algorithm~\ref{alg:ball} will only open centers inside of the larger cluster. This is proportional, but undesirable from an optimization perspective (note that the ``correct'' clustering of such an example is still proportional). Here, we show how to address this concern by optimizing the $k$-median objective subject to proportionality as a constraint. Later, in Section~\ref{sec:experiments}, we empirically study the tradeoff between the $k$-means objective and proportionality on real data.

We consider the $k$-median and $k$-means objectives to be reasonable measures of the global quality of a solution. We see minimizing the $k$-center objective more as a competing notion of fairness, and so we focus on optimizing the $k$-median objective subject to proportionality.\footnote{A constant approximation algorithm for minimizing the $k$-median objective immediately implies a constant approximation algorithm for minimizing the $k$-means objective by running the algorithm on the squared distances~\cite{mettuPlaxton}.} 
Minimizing the $k$-median objective without proportionality is a well studied problem in approximation algorithms, and several constant approximations are known~\cite{kMedianLP, kmedians, mettuPlaxton}. 
Most of this work is in the model where $\N \subseteq \M$, and we follow suit in this section. 
We show the following.

\begin{theorem}
\label{thm:lp}
Suppose there is a $\rho$-proportional clustering with $k$-median objective $c$. In polynomial time in $m$ and $n$, we can compute a $O(\rho)$-proportional clustering with $k$-median objective at most $8 c$.
\end{theorem}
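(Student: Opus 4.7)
The plan is to (i) augment the natural $k$-median LP with one additional linear constraint per candidate deviation center that encodes approximate proportionality, (ii) observe that the LP is feasible with value at most $c$, and (iii) apply the filtering/clustering/rounding scheme of~\cite{kMedianLP} so that the resulting integral solution simultaneously loses only a factor $8$ in the $k$-median objective and only a constant factor in $\rho$ on proportionality.

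\textbf{LP formulation.} Take the usual $k$-median LP with variables $x_j \in [0,1]$ (fractional opening of facility $j \in \M$) and $a_{ij} \in [0,1]$ (fractional assignment of client $i$ to $j$), the constraints $\sum_j x_j \leq k$, $\sum_j a_{ij} = 1$, $a_{ij} \leq x_j$, and objective $\sum_{i,j} d(i,j)\,a_{ij}$. For each $u \in \M$, let $\delta_u$ denote the distance from $u$ to its $\lceil n/k \rceil$-th closest point of $\N$, and add
\begin{equation*}
\sum_{j \in \M\,:\,d(j,u)\,\leq\,(\rho+1)\delta_u} x_j \;\geq\; 1 \qquad \forall u \in \M.
\end{equation*}
The justification is that if $X$ is any $\rho$-proportional integer clustering, then applied to the coalition $T(u)$ consisting of the $\lceil n/k \rceil$ closest points to $u$, proportionality yields some $i \in T(u)$ with an open center within distance $\rho\,d(i,u) \leq \rho\delta_u$ of $i$, hence within $(\rho+1)\delta_u$ of $u$ by the triangle inequality. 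Thus a $\rho$-proportional integer solution of $k$-median cost $c$ gives a feasible LP solution of cost $c$, so the LP optimum is at most $c$.

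\textbf{Rounding.} Solve the LP, then apply the Charikar--Guha--Tardos--Shmoys rounding to the fractional $(x,a)$. Their analysis produces an integral $X$ opening at most $k$ centers with $k$-median cost at most $8$ times the LP optimum, hence at most $8c$. To see that $X$ is also $O(\rho)$-proportional, I treat each $u \in \M$ as a \emph{virtual client}: the added constraint lets us place one fractional unit of assignment for $u$ entirely on facilities within distance $(\rho+1)\delta_u$, so the virtual client's LP serving cost is at most $(\rho+1)\delta_u$. The Charikar rounding guarantees that every client (real or virtual) ends up within a constant factor of its LP serving cost of some facility in $X$, so for each $u$ there exists $j_u \in X$ with $d(u,j_u) = O(\rho)\delta_u$. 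Letting $i^*_u$ be the $\lceil n/k \rceil$-th closest point to $u$, the triangle inequality gives
\begin{equation*}
D_{i^*_u}(X) \;\leq\; d(i^*_u, j_u) \;\leq\; d(i^*_u,u) + O(\rho)\delta_u \;=\; O(\rho)\,d(i^*_u, u),
\end{equation*}
so any would-be blocking coalition deviating to $u$ must contain $i^*_u$, which can improve its distance only by a factor $O(\rho)$; hence $X$ is $O(\rho)$-proportional.

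\textbf{Main obstacle.} The delicate step is certifying that the per-client distance guarantee of the Charikar rounding carries over to the virtual clients without expanding the facility budget past $k$ and without degrading the $8$-factor bound on the real $k$-median objective. I expect to handle this by folding the virtual clients into the filtering phase of~\cite{kMedianLP}, bounding their filtered radius by $O((\rho+1)\delta_u)$, and then checking that the subsequent greedy consolidation and rounding steps are indifferent to whether a client contributes to the objective or not; the constants $8$ and $O(\rho)$ fall out of this bookkeeping.
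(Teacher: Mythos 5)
Your plan coincides with the paper's own proof in all of its main choices: the augmented LP is exactly the program of Figure~\ref{fig:lp} with $\gamma = \rho+1$ in constraint~\ref{con4}, your feasibility argument for a $\rho$-proportional integer solution is the second direction of Lemma~\ref{lemma:fairnessConstraint}, and your closing triangle-inequality argument is its first direction. (One cosmetic slip there: a blocking coalition deviating to $u$ need not contain $i^*_u$ itself, only \emph{some} point at distance at least $\delta_u$ from $u$; since your inequality applies verbatim to that point, this is easily repaired.)

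The genuine gap sits precisely in the step you label the ``main obstacle,'' which is where essentially all of the work of the paper's Lemma~\ref{lemma:rounding} lives. The assertion that ``the Charikar rounding guarantees that every client (real or virtual) ends up within a constant factor of its LP serving cost of some facility in $X$'' is not a black-box property of that rounding: the CGTS analysis controls only the \emph{total} cost, via the weighted sum $\sum_j t'_j(1-\bar{y_j})\,d(s(j),j)$, and an individual consolidated location may end up served by an $s(j)$ that is far away relative to its own LP serving cost. Concretely, a location with $y'_j = 1$ after center consolidation can be demoted to $\bar{y_j} = \tfrac12$ by the original Step 2b (if its $t'_j d(s(j),j)$ is small) and then closed in the final rounding, leaving the ball $B(u,(\rho+1)\delta_u)$ with no nearby open center even though the fractional solution placed a full unit of facility mass there. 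The paper must (i) modify Step 2b so that consolidated centers with $y'_j=1$ are forced to remain open, and re-verify that the cost bound survives (Lemma~\ref{lemma:roundingCost}); and (ii) track the facility mass of constraint~\ref{con4} phase by phase: demand inside the ball moves at most $9\gamma R_u$ away because the constraint itself forces $\bar{C_j} \leq 2\gamma R_u$ for clients $j$ in the ball (Lemma~\ref{lemma:demandMovedTo5Ri}), a full unit of $y'$-mass then survives within $9\gamma R_u$, and after the modified Step 2b and the final rounding some center opens within $27\gamma R_u$, whence $(27(1+\rho)+1)$-proportionality by Lemma~\ref{lemma:fairnessConstraint}. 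Your virtual-client framing points in the right direction --- bounding the LP serving cost of points near $u$ by $O(\gamma R_u)$ is exactly the paper's first step --- but until the per-phase tracking and the Step 2b modification are carried out, the $O(\rho)$-proportionality of the rounded solution remains unproven.
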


In particular, we can compute a constant approximate proportional clustering with $k$-median objective at most eight times the minimum $k$-median objective proportional clustering. Note that the exact running time will depend on the algorithm used to solve the linear program. In the remainder of this section, we will sketch the proof of Theorem~\ref{thm:lp}. We begin with the standard linear programming relaxation of the $k$-median minimization problem, and then add a constraint to encode proportionality. The final linear program is shown in Figure~\ref{fig:lp}. Recall that $B(x, \delta) = \{i \in \N: \; d(i, x) \leq \delta\}$.

\begin{figure}
\begin{align}
	\label{obj} & \mbox{Minimize} & \sum_{i \in \N} \sum_{j \in \M} d(i, j) z_{ij} && \\
	\label{con1} & \mbox{Subject to} & \sum_{j \in \M} z_{ij} = 1 &  & \forall i \in \N \\
	\label{con2} && z_{ij} \leq y_j  &  & \forall j \in \M, \forall i \in \N \\
	\label{con3} && \sum_{j \in \M} y_j \leq k && \\
	\label{con4} && \sum_{j' \in B(j, \gamma R_j)} y_{j'} \geq 1 & & \forall j \in \M \\
	\label{con5} && z_{ij}, y_j \in [0, 1] & & \forall j \in \M, \forall i \in \N
\end{align} 
\caption{Proportional $k$-median Linear Program}
\label{fig:lp}
\end{figure}

In the LP, $z_{ij}$ is an indicator variable equal to 1 if $i \in \N$ is matched to $j \in \M$. $y_j$ is an indicator variable equal to $1$ if $j \in X$, i.e., if we want to use center $j \in \M$ in our clustering. Objective~\ref{obj} is the $k$-median objective. Constraint~\ref{con1} requires that every point be matched, and constraint~\ref{con2} only allows a point to be matched to an open center. Constraint~\ref{con3} allows at most $k$ centers to be opened, and constraint~\ref{con5} relaxes the indicator variables to real values between 0 and 1. 

Constraint~\ref{con4} is the new constraint that we introduce. Our crucial lemma argues that constraint~\ref{con4} approximately encodes proportionality. Let $R_j$ be the minimum value such that $|B(j, R_j)| \geq \lceil \frac{n}{k} \rceil$. In other words, $R_j$ is the distance of the $\lceil \frac{n}{k} \rceil$ farthest point in $\N$ from $j$.

\begin{lemma}
\label{lemma:fairnessConstraint}
	Let $X$ be a clustering, and let $\gamma \geq 1$. If \,$\forall j \in \M$ there exists some $x \in X$ such that $d(j, x) \leq \gamma R_j$, then $X$ is $(1+\gamma)$-proportional. If $X$ is $\gamma$-proportional, then $\forall j \in \M$ there exists some $x \in X$ such that $d(j, x) \leq (1+\gamma) R_j$.
\end{lemma}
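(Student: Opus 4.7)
The plan is to prove the two implications separately, using direct triangle-inequality arguments together with the defining property of $R_j$, namely that $R_j$ is the smallest radius around $j$ containing $\lceil n/k \rceil$ data points.

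For the first direction, I would suppose for contradiction that $X$ is not $(1+\gamma)$-proportional, so there exist $S \subseteq \N$ with $|S| \geq \lceil n/k \rceil$ and $y \in \M$ such that $(1+\gamma)\cdot d(i,y) < D_i(X)$ for every $i \in S$. The key observation is that because $R_y$ is the minimum radius with $|B(y,R_y)| \geq \lceil n/k\rceil$, any set of at least $\lceil n/k \rceil$ points must contain a point whose distance to $y$ is at least $R_y$; pick such a witness $i^{*} \in S$. By hypothesis there is some $x \in X$ with $d(y,x) \leq \gamma R_y \leq \gamma\, d(i^{*},y)$. Then the triangle inequality yields
\[
D_{i^{*}}(X) \;\leq\; d(i^{*},x) \;\leq\; d(i^{*},y) + d(y,x) \;\leq\; d(i^{*},y) + \gamma\, d(i^{*},y) \;=\; (1+\gamma)\, d(i^{*},y),
\]
contradicting the blocking-coalition inequality at $i^{*}$.

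For the second direction, I would again argue by contradiction: suppose $X$ is $\gamma$-proportional but there exists $j \in \M$ with $d(j,x) > (1+\gamma) R_j$ for every $x \in X$. Take $S := B(j,R_j)$, which has $|S| \geq \lceil n/k \rceil$ by definition of $R_j$. For each $i \in S$ and each $x \in X$, the reverse triangle inequality gives
\[
d(i,x) \;\geq\; d(j,x) - d(i,j) \;>\; (1+\gamma) R_j - R_j \;=\; \gamma R_j \;\geq\; \gamma\, d(i,j),
\]
so $D_i(X) > \gamma\, d(i,j)$ for every $i \in S$. But then $S$ together with the deviation center $j$ witnesses a violation of $\gamma$-proportionality, contradicting the hypothesis.

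I do not anticipate a serious obstacle: both implications reduce to one application of the triangle inequality once the right witness is identified. The only subtle point is the first direction, where one must argue that the minimality of $R_y$ forces some $i^{*} \in S$ to lie at distance at least $R_y$ from $y$; this follows because otherwise $S \subseteq B(y, R_y - \varepsilon)$ for small $\varepsilon > 0$ would contradict the definition of $R_y$ as the smallest such radius. The condition $\gamma \geq 1$ is used in the second direction to ensure $\gamma R_j \geq R_j \geq d(i,j)$ for $i \in S$, so that the final inequality $\gamma R_j \geq \gamma d(i,j)$ holds and produces a genuine $\gamma$-blocking coalition.
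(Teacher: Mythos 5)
Your proof is correct and follows essentially the same route as the paper's: both directions hinge on the same witnesses (a point of $S$ at distance at least $R_j$ from the deviating center in one direction, the ball $B(j,R_j)$ as the coalition in the other) and a single application of the triangle inequality. The only differences are cosmetic --- you phrase the second direction as a contrapositive where the paper argues directly --- and your closing remark slightly misattributes the role of $\gamma \geq 1$ (the inequality $\gamma R_j \geq \gamma\, d(i,j)$ needs only $\gamma > 0$ once $d(i,j) \leq R_j$), but this does not affect correctness.
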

\begin{proof}
	Suppose that $\forall j \in \M$ there exists some $x \in X$ such that $d(j, x) \leq \gamma R_j$. Suppose for a contradiction that $X$ is not $(1+\gamma)$-proportional. Then there exists $S \subseteq \N$ with $|S| \geq \lceil \frac{n}{k} \rceil$ and $j \in \M$ such that $\forall i \in S, \; (1+\gamma) \cdot d(i, j) < D_i(X)$. By assumption, $\exists x \in X$ such that $d(j, x) \leq \gamma R_j$, so by the triangle inequality $D_i(X) \leq d(i, j) + d(j, x) \leq d(i, j) + \gamma R_j$. Therefore, $\forall i \in S, \; \gamma \cdot d(i, j) < D_i(X) - d(i, j) \leq \gamma R_j$. However, by definition of $R_j$, since $|S| = \lceil \frac{n}{k} \rceil$, there must exist some $i \in S$ such that $d(i, j) \geq R_j$.
	
	Suppose that $X$ is $\gamma$-proportional. Let $j \in \M$. Consider the set $S$ of the closest $\lceil \frac{n}{k} \rceil$ points in $\N$ to $j$. By definition of proportionality $\exists i \in S$ and $x \in X$ such that $\gamma d(i, j) \geq d(i, x)$. Therefore, by the triangle inequality, $d(j, x) \leq d(i, j) + d(i, x) \leq (1+\gamma) d(i, j)$. By definition of $S$, $d(i, j) \leq R_j$, so there exists $x \in X$ such that $d(j, x) \leq (1+\gamma) R_j$.  
\end{proof}

Now, suppose there is a $\rho$-proportional clustering $X$ with $k$-median objective $c$. Then we write the linear program shown in Figure~\ref{fig:lp} with $\gamma = \rho+1$ in constraint~\ref{con4}. Lemma~\ref{lemma:fairnessConstraint} guarantees that $X$ is feasible for the resulting linear program, so the fractional solution has $k$-median objective at most $c$.  We then round the resulting fractional solution. In \cite{kMedianLP}, the authors give a rounding algorithm for the the linear program in Figure~\ref{fig:lp} without Constraint~\ref{con4}. 
We show that a slight modification to this rounding algorithm also 
preserves Constraint~\ref{con4} to a constant approximation.

\begin{lemma} (Proved in Section~\ref{sec:lemmaProof})
\label{lemma:rounding}
Let $\{y_j\}, \{z_{ij}\}$ be a fractional solution to the linear program in Figure~\ref{fig:lp}. Then there is an integer solution $\{\hat{y_j}\}, \{ \hat{z_{ij}}\}$ that is an $8$-approximation to the objective, and that opens $k$ centers. Furthermore, for all $j \in \M$, $\sum_{j' \in B(j, \, 27 \gamma R_j)} \hat{y_{j'}} \geq 1$.
\end{lemma}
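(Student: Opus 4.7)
The plan is to invoke the LP-rounding algorithm of Charikar, Guha, Tardos, and Shmoys~\cite{kMedianLP}, which gives an 8-approximation to the $k$-median LP (the LP of Figure~\ref{fig:lp} \emph{without} constraint~\ref{con4}) while opening exactly $k$ integer centers. The 8-approximation to the objective and the $k$-budget then transfer verbatim from~\cite{kMedianLP}; the only new ingredient is to check that their rounding already approximately preserves constraint~\ref{con4}, up to a blow-up from $\gamma R_j$ to $27 \gamma R_j$.

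Recall the three phases of their rounding. \emph{Filter:} for each $i \in \N$, set $z_{ij} = 0$ whenever $d(i,j) > 2 C_i^*$ with $C_i^* = \sum_j d(i,j) z_{ij}^*$, then rescale, which effectively doubles the $y$ values. \emph{Consolidate:} greedily pick a maximal set of representative demands whose pairwise distance exceeds $4 \max(C_i^*, C_{i'}^*)$, and assign every other demand to its nearest representative. \emph{Open:} via an LP-duality/matching argument on the bipartite support of the filtered solution, open exactly $k$ integer centers, at most two per cluster. The central analytical fact is that each demand $i$ is within a bounded multiple of $C_i^*$ of an opened center, which yields the 8-approximation and preserves constraints~\ref{con1}--\ref{con3}.

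To certify constraint~\ref{con4}, I would treat every $j \in \M$ as an \emph{analysis-only pseudo-demand} with pseudo-fractional cost $\gamma R_j$. Constraint~\ref{con4} says that $\sum_{j' \in B(j,\gamma R_j)} y_{j'} \geq 1$, i.e.\ $j$ is ``fractionally served'' at effective radius $\gamma R_j$, so the rounding argument applies to $j$ unchanged: after filtering, at least a constant fraction of the certifying $y$-mass remains within $2\gamma R_j$ of $j$; the consolidation step assigns $j$ to some representative $r$ within $4 \gamma R_j$ (with $C_r^* \leq \gamma R_j$); and the opening step produces an integer open center within a bounded multiple of $\gamma R_j$ of $r$. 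Triangle-inequality composition over the path $j \to r \to \text{opened center}$ then yields an open center within $27 \gamma R_j$ of $j$. Because the pseudo-demands are never actually introduced into the LP, the $k$-budget and the $k$-median approximation guarantee are untouched.

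The main obstacle, and the source of the constant $27$, is composing these three triangle-inequality hops cleanly. The filter contributes a factor of $2$, the clustering radius contributes $4 \gamma R_j$, and the representative's opening guarantee (inherited from the 8-approximation analysis) contributes a further factor, which, chained with the $j \to r$ step, yields $27$ in the worst case. A second point to verify is that the opening step, whose budget is $\sum y_j \leq k$ rather than the number of demands, still opens a center in $j$'s cluster: this follows because the $y$-mass certifying $j$'s constraint survives filtering within $B(j, 2 \gamma R_j)$ and is therefore present as support in the consolidated instance, forcing the rounding to open at least one center within that cluster. Extending the analysis rather than modifying the algorithm keeps the output, the $k$-budget, and the $k$-median cost bound intact.
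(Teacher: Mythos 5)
Your overall strategy---push constraint~\ref{con4} through the rounding of~\cite{kMedianLP} as an analysis-only side condition---is the same one the paper uses, and your first two hops (demand/center consolidation keeps at least unit $y$-mass within a bounded multiple of $\gamma R_j$ of $j$) correspond to the paper's Lemmas on Steps 1 and 2a. But there is a genuine gap at the final opening step, and it is exactly where you wave your hands: you assert that because certifying $y$-mass survives near $j$, the integral rounding is ``forced to open at least one center within that cluster.'' That is false for the \emph{unmodified} algorithm. The last phase of~\cite{kMedianLP} first converts the solution to a $\{\tfrac12,1\}$-integral one by sorting locations and promoting only the top few to $\bar{y}=1$, and then, for a location left at $\bar{y}=\tfrac12$, may open not that location but its nearest neighbor in the consolidated set $\M'$ --- a point whose distance from $j$ is controlled only by the $k$-median cost analysis, not by $\gamma R_j$. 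If the ball $B(j, 9\gamma R_j)$ contains exactly one location of $\M'$ (which can happen), that location carries $y'=1$, but the original sorting step can demote it to $\bar{y}=\tfrac12$, after which the opened center can be arbitrarily far from $j$. So ``extending the analysis rather than modifying the algorithm'' does not go through.

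The paper's fix is precisely an algorithmic modification: in Step 2b, locations with $y'_j=1$ are exempted from the sorting and are unconditionally assigned $\bar{y}_j=1$, and one must re-prove that this modified step still does not increase the cost (the paper's Lemma~\ref{lemma:roundingCost}). With that change, a case analysis gives the dichotomy that $B(j,9\gamma R_j)$ contains either one location with $\bar{y}=1$ (which is then certainly opened) or two locations with $\bar{y}\ge\tfrac12$; in the latter case the nearest-neighbor substitution stays within the ball's diameter, giving $9\gamma R_j + 18\gamma R_j = 27\gamma R_j$. Your constant accounting ($2$ from filtering, $4\gamma R_j$ from clustering, etc.) does not track this derivation, and your description of the phases (filter-and-rescale by $2C_i^*$) is closer to the Shmoys--Tardos--Aardal filtering scheme than to the consolidation-based rounding of~\cite{kMedianLP} that the constant $27$ actually comes from. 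To repair the proof you need (i) the explicit modification of Step 2b plus its cost lemma, and (ii) the one-versus-two fractionally-open-locations case analysis inside $B(j,9\gamma R_j)$.
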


The proof of Lemma~\ref{lemma:rounding} is in parts and involves the technical details of~\cite{kMedianLP}. We provide the proof in section~\ref{sec:lemmaProof}, but first we complete the overall proof of Theorem~\ref{thm:lp}. Given Lemma~\ref{lemma:rounding}, applying Lemma~\ref{lemma:fairnessConstraint} again implies that the result of the rounding is $(27(1+\rho)+1)$-proportional, since we set $\gamma = 1+\rho$. Since the $k$-median objective of the fractional solution is at most $c$, the fact that the $k$-median objective of the rounded solution is at most $8c$ follows directly from the proof from~\cite{kMedianLP}. We note that the constant factor of 27 can be improved to 13 in the special case where $\N = \M$. Interestingly, the ostensibly similar primal-dual approach of~\cite{kMedianPrimalDual} does not appear amenable to the added constraint of proportionality (in particular, the reduction to facility location from~\cite{kMedianPrimalDual} is no longer straightforward).    


\subsection{Proof of Lemma~\ref{lemma:rounding}}
\label{sec:lemmaProof}

First, we give a brief overview of the method from \cite{kMedianLP}. Note that our goal in this argument is to show that the new constraint we added (constraint~\ref{con4}) is approximately satisfied after this rounding. The authors work in a setting where points can have \textit{demand}. In our setting, this just corresponds to points in $\N$ having a demand of 1, and moving or consolidating demand can be thought of as changing the instance by moving points in $\N$. Note that the original linear program requires $\N \subseteq \M$, and we follow suit in this proof. 

Given a fractional solution to the linear program in Figure~\ref{fig:lp} as $\{y_j\}, \{z_{ij}\}$, let $\bar{C_i} = \sum_{j \in \M} d(i, j) z_{ij}$. That is, $\bar{C_i}$ is the contribution of point $i$ to the $k$-median objective in the fractional optimum.
\begin{itemize}
  \item Step 1: Consolidate all demands $t_i$ to obtain $\{t'_i\}$ such that for all $i, j \in \M$ with $t'_i > 0, t'_j > 0$, they must be sufficiently far away such that $c_{ij}>4\max(\bar{C_i},\bar{C_j})$. Let $\M'$ be the set of centers with positive demand after this step, i.e. $\M' = \{j \in \M: t'_j > 0\}$.
  \item Step 2a: Consolidate open centers by moving each center not in $\M'$ to the nearest center in $\M'$. This gives a new solution $\{y'_j\}, \{z'_{ij}\}$ with $y'_j \geq \frac{1}{2}$ if $j \in \M'$ and $y'_j = 0$ if $j \notin \M'$. We call this a $\frac{1}{2}$-restricted solution.
  \item Step 2b: Modify the solution further to obtain $\{ \bar{y_j} \}, \{ \bar{z_{ij}} \}$ with $\bar{y_j} \in \{ \frac{1}{2}, 1 \}$ if $j \in \M'$ and $\bar{y_j} = 0$ if $j \notin \M'$. We call this a $\lbrace \frac{1}{2},1 \rbrace$-integral solution.
  \item Step 3: Round $\{ \bar{y_j} \}, \{ \bar{z_{ij}} \}$ to obtain an integer solution $\{ \hat{y_j} \}, \{ \hat{z_{ij}} \}$.
\end{itemize}

We introduce constraint~\ref{con4} in our linear program, and make a small modification to Step 2b as described in sections below.

\subsubsection{Step 1: Consolidating Demands}
Our first observation is that during the demand consolidation, it cannot be the case that all of the demand within a given $B(i, \gamma R_i)$ from constraint~\ref{con4} is moved arbitrarily far away.

\begin{lemma}
\label{lemma:demandMovedTo5Ri}
    Fix $i \in \M$. For each $j \in B(i, \gamma R_i)$ that had its demand moved to $j' \in \M'$, $d(i, j') \leq 9 \gamma R_i$.
\end{lemma}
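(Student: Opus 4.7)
The plan is to combine the triangle inequality with the demand-consolidation property from \cite{kMedianLP} and with the new constraint~\ref{con4}. Since $d(i,j') \le d(i,j) + d(j,j')$ and the hypothesis $j \in B(i, \gamma R_i)$ already gives $d(i,j) \le \gamma R_i$, it suffices to show $d(j,j') \le 8\gamma R_i$.

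First I would invoke the standard consolidation rule used in \cite{kMedianLP} to achieve the $4\max(\bar{C_i},\bar{C_j})$-separation property stated at the top of this subsection: points of $\N$ are processed in increasing order of $\bar{C_j}$, and $j$ is added to $\M'$ unless some $j' \in \M'$ already lies within distance $4\bar{C_j}$ of $j$, in which case $j$'s demand is moved to the nearest such $j'$. This immediately gives $d(j,j') \le 4\bar{C_j}$ whenever $j$'s demand is moved to $j'$.

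Second, I would bound $\bar{C_j} \le 2\gamma R_i$ using constraint~\ref{con4} applied at $i$. That constraint gives $\sum_{j'' \in B(i,\gamma R_i)} y_{j''} \ge 1$, and since $d(i,j) \le \gamma R_i$, the triangle inequality yields $B(i,\gamma R_i) \subseteq B(j,2\gamma R_i)$, so $\sum_{j'' \in B(j,2\gamma R_i)} y_{j''} \ge 1$. Without loss of generality an LP optimum concentrates the row $z_{j,\cdot}$ on centers within $B(j,2\gamma R_i)$: rerouting $j$'s unit of fractional demand to nearer centers preserves $\sum_{j''} z_{jj''}=1$ and $z_{jj''}\le y_{j''}$, leaves constraints~\ref{con3} and~\ref{con4} untouched (both depend only on $y$), and can only decrease the objective. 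Hence $\bar{C_j}\le 2\gamma R_i$, and chaining gives $d(j,j') \le 4\bar{C_j} \le 8\gamma R_i$, so $d(i,j') \le 9\gamma R_i$.

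The step I expect to be the main obstacle is justifying the WLOG reassignment in the bound on $\bar{C_j}$. Once one observes that every LP constraint is either row-wise in $z$ (so that rows can be independently rearranged) or depends only on the $y$ variables, the nearest-first reassignment is simultaneously feasible across all rows and nonincreasing in the objective. After that observation, the constant $9$ in the lemma falls out cleanly as $1 + 4\cdot 2$, matching the statement exactly.
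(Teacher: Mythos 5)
Your proposal is correct and follows essentially the same route as the paper's proof: triangle inequality from $i$ to $j$ to $j'$, the consolidation guarantee $d(j,j') \leq 4\bar{C_j}$, and the bound $\bar{C_j} \leq 2\gamma R_i$ derived from constraint~\ref{con4}. Your explicit justification of the last bound (rerouting $j$'s fractional demand to the unit of $y$-mass inside $B(j, 2\gamma R_i)$ without increasing the objective) is a welcome elaboration of a step the paper asserts more tersely, but it is the same argument.
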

\begin{proof}
    Let $j \in B(i, \gamma R_i)$. Let $j'$ be the location to which the demand for $j$ was moved in Step 1. Note that Step 1 is designed so that if demand at $j$ is moved to another point $j'$, then $\bar{C_{j'}} \leq \bar{C_j}$, and $c_{jj'} \leq 4\max(\bar{C_j},\bar{C_{j'}}) = 4 \bar{C_j}$. By constraint~\ref{con4}, we know that the demand of $j$ could be completely satisfied by centers fractionally opened inside of $B(i, \gamma R_i)$, so $\bar{C_j} \leq 2 \gamma R_i \implies c_{jj'} \leq 8 \gamma R_i$. Since $j \in B(i, \gamma R_i)$, it follows that $d(i, j') \leq 9 R_i$.
\end{proof}

\subsubsection{Step 2: Consolidating Centers}
Next, we argue that the consolidation of fractional centers in Step 2 approximately preserves constraint~\ref{con4}.

\begin{lemma}
\label{lemma:centerIn5Ri}
    After Step 2a, $\forall j \in \M$, $\sum_{j' \in B(j, 9 \gamma R_j)} y'_{j'} \geq 1$.
\end{lemma}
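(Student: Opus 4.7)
The plan is to track what happens to the fractional opening guaranteed by constraint~\ref{con4} as Step~2a moves mass to $\M'$. Starting from $\sum_{j' \in B(j, \gamma R_j)} y_{j'} \geq 1$, I would show that every unit of this mass ends up somewhere inside the inflated ball $B(j, 9\gamma R_j)$, so the constraint is preserved up to a factor of $9$ in the radius.

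First I would dispose of the easy case: centers $j' \in \M' \cap B(j, \gamma R_j)$ keep their opening in Step~2a, and they already lie inside $B(j, \gamma R_j) \subseteq B(j, 9\gamma R_j)$. The real work is for $j' \in B(j, \gamma R_j) \setminus \M'$, whose fractional opening $y_{j'}$ is relocated to its nearest center in $\M'$, call it $j''$. The key step is to bound $d(j, j'')$. I would invoke the Step~1 invariant: $j'$'s demand was consolidated to some $\hat{j} \in \M'$ with $d(j', \hat{j}) \leq 4\bar{C}_{j'}$. Because constraint~\ref{con4} at $j$ guarantees a unit of fractional opening inside $B(j, \gamma R_j)$, any point in this ball (including $j'$) can be fractionally served from within the ball at cost at most $2\gamma R_j$, hence $\bar{C}_{j'} \leq 2\gamma R_j$. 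Consequently $d(j', j'') \leq d(j', \hat{j}) \leq 8\gamma R_j$, and the triangle inequality gives $d(j, j'') \leq \gamma R_j + 8\gamma R_j = 9\gamma R_j$. This is essentially the same computation as Lemma~\ref{lemma:demandMovedTo5Ri}, just applied to the center relocation in Step~2a rather than to demand relocation in Step~1.

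Putting this together, every $y_{j'}$ contributing to $\sum_{j' \in B(j, \gamma R_j)} y_{j'}$ is mapped by Step~2a to a $y'$-coefficient on some center inside $B(j, 9\gamma R_j)$, with several $j'$ possibly mapping to the same $j''$ (in which case masses accumulate on that $j''$). Hence
\[
  \sum_{j'' \in B(j, 9\gamma R_j)} y'_{j''} \;\geq\; \sum_{j' \in B(j, \gamma R_j)} y_{j'} \;\geq\; 1,
\]
which is the desired conclusion. The main obstacle I anticipate is purely bookkeeping: carefully justifying $\bar{C}_{j'} \leq 2\gamma R_j$ from constraint~\ref{con4} (this is implicit in the preceding lemma but worth stating cleanly), and noting that the nearest center in $\M'$ to $j'$ can only be closer than the particular $\hat{j}$ produced by Step~1, so the bound $8\gamma R_j$ transfers from demand-consolidation distances to the Step~2a relocation distance with no loss.
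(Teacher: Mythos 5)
Your proposal is correct and follows essentially the same route as the paper: the paper likewise tracks each unit of opening in $B(j,\gamma R_j)$ to the nearest $\M'$-location and bounds its displacement by $8\gamma R_j$ via $\bar{C}_{j'}\leq 2\gamma R_j$ (this is exactly Lemma~\ref{lemma:demandMovedTo5Ri}, which you rederive inline), then sums to recover constraint~\ref{con4} inside the inflated ball.
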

\begin{proof}
    Consider each location $j' \in B(j, \gamma R_j)$. Let $j''$ be the location in $\M'$ closest to $j'$. By Lemma~\ref{lemma:demandMovedTo5Ri}, there exists such a $j''$ with $d(j, j'') \leq 9 \gamma R_j$, so $j'' \in B(j, 9 \gamma R_j)$. Step 2a will move the fractional center at $j'$ to $j''$, so $y'_{j''} \geq \min(1, y_{j'} + y_{j''}) \geq y_{j'}$. Summing over all $j' \in B(j, \gamma R_j)$, we have $\sum_{j'' \in B(j, 9 \gamma R_j)} y'_{j''} \geq \sum_{j' \in B(j, \gamma R_j)} y_j' \geq 1$ by constraint~\ref{con4}.
\end{proof}

For our algorithm, we slightly change Step 2b to the following: Let $\M'' = \lbrace j\in \M': \; y'_j < 1 \rbrace$, $m'=|\M'|$ and $m''=|\M''|$. Sort the locations $j \in \M''$ in decreasing order of $t'_j d(s(j), j)$, where $s(j)$ is the location in $\M'$ closest to $j$ (other than $j$). Set $\bar{y_j} = 1$ for the first $2k - 2m' + m''$ locations in $\M''$ or if $j \in \M'\backslash \M''$, and $\bar{y_j} = \frac{1}{2}$ otherwise. In other words, the only difference from the original Step 2b in \cite{kMedianLP} is that points with integral $y'_j = 1$ will not participate in the sorting; we simply set $\bar{y_j} = 1$ for such points, and then perform the standard rounding on $M''$. \cite{kMedianLP} use the following statement in their proof, and we show that it still holds after our modification to Step 2b.

\begin{lemma}
\label{lemma:roundingCost}
    For any $\frac{1}{2}$-restricted solution $\{y'_j\}, \{z'_{ij}\}$, the modified Step 2b givs a $\lbrace \frac{1}{2},1 \rbrace$-integral solution with no greater cost.
\end{lemma}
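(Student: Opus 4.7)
The plan is to adapt the cost-preservation argument of \cite{kMedianLP} to the modified Step 2b, exploiting the fact that restricting the sort/round operation from $\M'$ to $\M''$ leaves the contribution of $\M' \setminus \M''$ untouched. I would structure the proof in two stages: feasibility of the number of up-roundings, followed by a term-by-term cost comparison against the input $\frac{1}{2}$-restricted solution.

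First I would verify feasibility. Each $j \in \M' \setminus \M''$ has $y'_j = 1$ and therefore already consumes one unit of the center budget $k$; the total such consumption is $m' - m''$. If we designate $a$ of the $m''$ points of $\M''$ to receive $\bar{y_j} = 1$ and the remaining $m'' - a$ to receive $\bar{y_j} = \tfrac{1}{2}$, then the center-count constraint $\sum_j \bar{y_j} \leq k$ becomes $a + \tfrac{1}{2}(m'' - a) \leq k - (m' - m'')$, which rearranges to $a \leq 2k - 2m' + m''$. The modified rule sets exactly this many $\M''$-points to $1$, so the resulting $\{\tfrac{1}{2},1\}$-integral solution is feasible.

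Next I would decompose the change in serving cost relative to $\{y'_j\},\{z'_{ij}\}$. For $j \in \M' \setminus \M''$ we have $\bar{y_j} = y'_j = 1$: these centers are fully open in both solutions, so the demand they serve and the cost incurred are identical, contributing zero to the cost delta. Hence the entire cost difference is concentrated on $\M''$. On $\M''$ the local picture---locations with $y'_j \in [\tfrac{1}{2}, 1)$, demands $t'_j$, nearest-other-center $s(j)$---is structurally the same as the input of the original Step 2b in \cite{kMedianLP}, and the modified rule sorts $\M''$ using the identical key $t'_j\, d(s(j), j)$ and rounds the top entries to $1$. Invoking the analysis of \cite{kMedianLP} restricted to this subcollection then shows that for each $j \in \M''$ set to $\tfrac{1}{2}$, the additional cost is at most $\tfrac{t'_j}{2} d(s(j), j)$, while for each $j \in \M''$ set to $1$ the savings relative to the fractional serving dominate these additions in the sorted order, yielding a non-positive net change.

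The main obstacle is confirming that the swap/exchange arguments underlying the original Step 2b analysis remain valid when $s(j)$ for some $j \in \M''$ lies in $\M' \setminus \M''$ and is therefore frozen at $\bar{y_{s(j)}} = 1$. Fortunately this only strengthens the bookkeeping: a partner that is already fully open can absorb the redirected half of $j$'s demand without participating in the rounding, so the per-point bound $\tfrac{t'_j}{2} d(s(j), j)$ still applies verbatim. With this check, the aggregate cost of $\{\bar{y_j}\},\{\bar{z_{ij}}\}$ is at most that of $\{y'_j\},\{z'_{ij}\}$, which is the desired conclusion.
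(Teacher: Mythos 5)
Your proposal is correct and follows essentially the same route as the paper: both reduce the cost of a $\frac{1}{2}$-restricted solution to the expression $\sum_{j} t'_j\,(1-y_j)\,d(s(j),j)$ from Lemma~7 of~\cite{kMedianLP}, observe that the terms for $j \in \M' \setminus \M''$ vanish identically, and conclude by noting that the sorted up-rounding on $\M''$ maximizes $\sum_{j \in \M''} t'_j\, d(s(j),j)\,\bar{y_j}$ under the center budget, hence cannot increase the cost. Your added feasibility check of the count $2k - 2m' + m''$ and the remark about $s(j)$ possibly lying in $\M' \setminus \M''$ are fine but do not change the substance of the argument.
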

\begin{proof}
    From Lemma 7 in \cite{kMedianLP}, the cost of the $\frac{1}{2}$-restricted solution $\{y'_j\}, \{z'_{ij}\}$ is
    \begin{flalign*}
        &&&&&\phantom{{}={}} \sum_{j \in \M'} t'_j \left( 1 - y'_j \right) d(s(j), j)&&\\
        &&&&&=\sum_{j \in \M''} t'_j \left( 1 - y'_j \right) d(s(j), j)&&\\
        &&&&&=\sum_{j \in \M''} t'_j d(s(j), j))-\sum_{j \in \M''} t'_j d(s(j), j) y'_j.&&
    \end{flalign*}
    where the second line follows because $y'_j = 1 \; \forall j\in \M' \backslash \M''$. Our algorithm in Step 2b maximizes $\sum_{j\in \M''} t'_j d(s(j), j) \bar{y_j}$ for the given set of $t'_j d(s(j), j)$, hence achieves a cost at most that of $\{y'_j\}, \{z'_{ij}\}$.
\end{proof}

\begin{lemma}
\label{lemma:twoHalfCenters}
    After Step 2b, $\forall j \in \M$, there is either at least one $j' \in B(j, 9 \gamma R_j)$ with $\bar{y_{j'}} = 1$ or at least two $j' \in B(j, 9 \gamma R_j)$ with $\bar{y_{j'}} \geq \frac{1}{2}$.
\end{lemma}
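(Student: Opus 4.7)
The plan is to apply Lemma~\ref{lemma:centerIn5Ri} and then do a short case analysis on how the values $y'_{j'}$ (for $j'$ in the ball $B(j, 9\gamma R_j)$) are converted to $\bar{y}_{j'}$ values in Step 2b. Recall from the description of Step 2b that $\bar{y}_{j'} = 1$ whenever $j' \in \M' \setminus \M''$ (i.e., whenever $y'_{j'} = 1$), and $\bar{y}_{j'} \in \{\tfrac{1}{2}, 1\}$ whenever $j' \in \M'' \subseteq \M'$. So for any $j' \in \M'$, we always have $\bar{y}_{j'} \geq \tfrac{1}{2}$, and the only way to lose the weight of a fractionally open center entirely is if $j' \notin \M'$.

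I would start by fixing $j \in \M$ and invoking Lemma~\ref{lemma:centerIn5Ri} to get $\sum_{j' \in B(j, 9\gamma R_j)} y'_{j'} \geq 1$. I split into two cases. In Case 1, suppose there exists $j' \in B(j, 9\gamma R_j)$ with $y'_{j'} = 1$. Then $j' \in \M' \setminus \M''$, so Step 2b assigns $\bar{y}_{j'} = 1$, and the first alternative of the lemma is verified with this $j'$.

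In Case 2, every $j' \in B(j, 9\gamma R_j) \cap \M'$ has $y'_{j'} < 1$, and hence every such $j'$ lies in $\M''$. Since $y'_{j'} = 0$ outside $\M'$ and each $y'_{j'}$ in this set is strictly less than $1$, the inequality $\sum_{j' \in B(j, 9\gamma R_j)} y'_{j'} \geq 1$ forces at least two distinct indices $j'_1, j'_2 \in B(j, 9\gamma R_j) \cap \M''$ with $y'_{j'_1}, y'_{j'_2} > 0$ (a single point of mass strictly less than $1$ cannot meet the threshold). By the description of Step 2b, both $\bar{y}_{j'_1}$ and $\bar{y}_{j'_2}$ take values in $\{\tfrac{1}{2}, 1\}$, so each is at least $\tfrac{1}{2}$, verifying the second alternative of the lemma.

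There is no real obstacle here: the only thing to double-check is the bookkeeping in Case 2, namely that strict inequality $y'_{j'} < 1$ on every contributor combined with total mass $\geq 1$ in the ball does force at least two contributors, which is immediate. Everything else follows mechanically from Lemma~\ref{lemma:centerIn5Ri} and the way Step 2b treats $\M' \setminus \M''$ versus $\M''$.
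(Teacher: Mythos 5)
Your proof is correct and follows essentially the same route as the paper: both invoke Lemma~\ref{lemma:centerIn5Ri} to get total fractional mass at least $1$ inside $B(j, 9\gamma R_j)$ and then case-split on the positive contributors, using the fact that Step 2b maps every $j' \in \M'$ to $\bar{y}_{j'} \in \{\tfrac{1}{2},1\}$. The only cosmetic difference is that you split on whether some $y'_{j'}$ equals $1$ while the paper splits on whether there is exactly one or at least two positive contributors; the two case analyses are interchangeable.
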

\begin{proof}
    Given Lemma~\ref{lemma:centerIn5Ri} and the constraints on $y'_i$ after Step 2a, there must be at least one $j' \in B(j, 9 \gamma R_j)$ with positive demand. If there is exactly one such $j'$, Lemma 3 is equivalent to $y'_{j'} = 1$ and Step 2b will ensure $\bar{y_{j'}} = 1$. If there are at least two such $j'$, all of them will have $\bar{y_{j'}} \geq \frac{1}{2}$ after Step 2b.
\end{proof}

\subsubsection{Step 3: Rounding an Integer Solution}
\cite{kMedianLP} gives two rounding schemes, and we use the first one that at most doubles the cost. The important observation is that any center with $\bar{y_j} = 1$ will be opened in the integral solution (that is, if $\bar{y_j} = 1$ then $\hat{y_j}=1$, and for any center with $\bar{y_j}=\frac{1}{2}$, either $j$ itself or another center in $\M'$ closest to $j$ will be opened. This allows us to complete our argument. 

\begin{lemma}
    For all $j \in \M$, $\sum_{j' \in B(j, \, 27 \gamma R_j)} \hat{y_{j'}} \geq 1$.
\end{lemma}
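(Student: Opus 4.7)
The plan is to combine Lemma~\ref{lemma:twoHalfCenters} with the Step~3 rounding property recalled in the excerpt: any $j'$ with $\bar{y}_{j'} = 1$ satisfies $\hat{y}_{j'} = 1$, while any $j'$ with $\bar{y}_{j'} = \tfrac{1}{2}$ is either itself opened, or else the nearest other member $s(j') \in \M' \setminus \{j'\}$ is opened in its place. Fix $j \in \M$; Lemma~\ref{lemma:twoHalfCenters} splits into two configurations of $\bar{y}$ inside the ball $B(j, 9\gamma R_j)$, and I would analyze each in turn.

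In the easy configuration, some $j' \in B(j, 9\gamma R_j)$ has $\bar{y}_{j'} = 1$, so the rounding sets $\hat{y}_{j'} = 1$. Since $B(j, 9\gamma R_j) \subseteq B(j, 27\gamma R_j)$, this $j'$ alone already contributes a full unit to $\sum_{j'' \in B(j, 27\gamma R_j)} \hat{y}_{j''}$.

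In the harder configuration, there are two distinct centers $j'_1, j'_2 \in B(j, 9\gamma R_j) \cap \M'$ with $\bar{y}_{j'_1} = \bar{y}_{j'_2} = \tfrac{1}{2}$ (if either equals $1$ we fall back to the easy case). If Step~3 opens $j'_1$ itself, we are immediately done since $j'_1 \in B(j, 27\gamma R_j)$. Otherwise Step~3 opens $s(j'_1)$, the closest point of $\M' \setminus \{j'_1\}$ to $j'_1$. Since $j'_2 \in \M' \setminus \{j'_1\}$, it follows that $d(j'_1, s(j'_1)) \leq d(j'_1, j'_2)$, and two applications of the triangle inequality give
\[
d(j, s(j'_1)) \;\leq\; d(j, j'_1) + d(j'_1, s(j'_1)) \;\leq\; d(j, j'_1) + d(j'_1, j) + d(j, j'_2) \;\leq\; 3 \cdot 9\gamma R_j \;=\; 27\gamma R_j.
\]
Hence $s(j'_1) \in B(j, 27\gamma R_j)$ and $\hat{y}_{s(j'_1)} = 1$, which suffices.

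The main obstacle is really just understanding where the constant $27$ comes from and why it is forced. The argument above isolates the worst case: the rounding can close one of the two half-opened surrogates inside $B(j, 9\gamma R_j)$ and redirect to a partner reached, in the worst case, by the chain $s(j'_1) \leftarrow j'_1 \to j \to j'_2$, at a cost of $9\gamma R_j$ per hop after accounting for Lemma~\ref{lemma:demandMovedTo5Ri} and Lemma~\ref{lemma:centerIn5Ri}. No additional ideas beyond Lemma~\ref{lemma:twoHalfCenters}, the description of Step~3, and the triangle inequality appear to be needed; every looseness in the constants simply propagates multiplicatively through this routing bound.
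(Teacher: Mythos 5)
Your proposal is correct and follows essentially the same route as the paper's own proof: split on the two configurations from Lemma~\ref{lemma:twoHalfCenters}, note that a fully open $\bar{y}_{j'}=1$ survives rounding, and in the two-half-centers case bound the detour to the nearest other partially open center by the pairwise distance $d(j'_1,j'_2) \leq 18\gamma R_j$, giving $9\gamma R_j + 18\gamma R_j = 27\gamma R_j$ in total. Your accounting of where the factor of $3$ (hence $27$) comes from matches the paper's argument exactly.
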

\begin{proof}
    By Lemma~\ref{lemma:twoHalfCenters}, there are two cases: either there is some $j' \in B(j, 9 \gamma R_j)$ with $\bar{y_{j'}} = 1$ or there are at least two $j' \in B(j, 9 \gamma R_j)$ with $\bar{y_{j'}} \geq \frac{1}{2}$. In the first case, $\hat{y_{j'}} = \bar{y_{j'}} = 1$, so the lemma statement clearly holds. In the second case, we are guaranteed that for each point $j'$, either we set $\hat{y_{j'}} = 1$ (in which case the Lemma holds) or we set $\hat{y_{j''}} = 1$ where $j''$ is the closest other at least partially open center. But since in this case there are \textit{two} points in $B(j, 9 \gamma R_j)$ partially open, their pairwise distance is at most $18 \gamma R_j$, so $d(j', j'') \leq 18 \gamma R_j \implies d(j, j'') \leq 27 \gamma R_j.$
\end{proof}

This concludes the proof of Lemma~\ref{lemma:rounding}. The fact that $\{ \hat{y_j} \}, \{ \hat{z_{ij}} \}$ is an 8-approximation of the objective follows immediately from the proof from \cite{kMedianLP} given Lemma~\ref{lemma:roundingCost}, as all other constraints are still satisfied. Finally, we note that the constant factor of 27 can be tightened to 13 in the special case where $\N = \M$. The argument is essentially the same. The crucial improvement comes from the guarantee that $\forall i \in \M$, there is demand at the \textit{center} of each $B(i, \gamma R_i)$. Tracking this demand throughout the rounding leads to the tightened result.

\section{Sampling for Linear-Time Implementations and Auditing}
\label{sec:audit}
In this section, we study proportionality under uniform random sampling (i.e., draw $|N|$ individuals i.i.d. from the uniform distribution on $\N$). In particular, we show that proportionality is well preserved under random sampling. This allows us to design efficient implementations of Algorithm~\ref{alg:ball} and Algorithm~\ref{alg:local}, and to introduce an efficient algorithm for auditing proportionality. 
We first present the general property and then demonstrate its various applications. 

\subsection{Proportionality Under Random Sampling}
For any $X \subseteq \M$ of size $k$ and center $y \in \M$, define $R(\N, X, y) = \{i \in \N : D_i(X) > \rho \cdot d(i,y) \}$. Note that solution $X$ is not $\rho$-proportional with respect to $\N$ if and only if there is some $y \in \M$ such that $\frac{|R(\N,X,y)|}{|\N|} \geq \frac{1}{k}$. 
A random sample approximately preserves this fraction for all solutions $X$ and deviating centers $y$. The important idea in the proof is that we take a union bound over all possible solutions and deviations, and there are only $k {m \choose k}$ such combinations. 

\begin{theorem}
\label{thm:sampling}
Given $\N$, $\M$ and parameter $\rho \ge 1$, fix parameters $\epsilon, \delta \in [0,1]$. Let $N \subseteq \N$ of size $\Omega\left(\frac{k^3}{\epsilon^2} \log \frac{m}{\delta}\right)$ be chosen uniformly at random. Then, with probability at least $1-\delta$, the following holds for all $(X,y)$:
$$ \left| \frac{| R(N,X,y) |}{|N|} - \frac{| R(\N,X,y)  |}{|\N|} \right| \le \frac{\epsilon}{k}$$	
\end{theorem}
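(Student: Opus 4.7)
The plan is to proceed by a union bound over all possible pairs $(X,y)$, using a concentration inequality separately for each such pair. Fix one pair $(X, y)$ with $|X|=k$ and $y \in \M$, and let $p := |R(\N,X,y)|/|\N|$ be the true fraction of agents in $\N$ whose witness of the $\rho$-proportionality violation is $y$ against $X$. For each sampled point $i$, the indicator $\mathbf{1}[i \in R(\N,X,y)]$ is a Bernoulli random variable with mean $p$, and these indicators are i.i.d.\ across the sample. Thus $|R(N,X,y)|/|N|$ is an empirical mean of bounded $[0,1]$ random variables with expectation $p$.

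Next I would apply Hoeffding's inequality with additive deviation $\epsilon/k$, yielding for a sample of size $s$:
\begin{equation*}
\Pr\!\left[\, \left| \tfrac{|R(N,X,y)|}{|N|} - \tfrac{|R(\N,X,y)|}{|\N|} \right| > \tfrac{\epsilon}{k} \,\right] \;\le\; 2\exp\!\left(- \tfrac{2 s \epsilon^2}{k^2}\right).
\end{equation*}
(The same tail bound holds for uniform sampling without replacement via Hoeffding's original reduction; this choice is immaterial for the stated asymptotic bound.) Then I would union-bound over all $(X,y)$. Since $X$ ranges over $k$-subsets of $\M$ and $y$ ranges over $\M$, the number of pairs is at most $m\binom{m}{k} \le m^{k+1}$, so the failure probability is at most $2 m^{k+1} \exp(-2 s \epsilon^2 / k^2)$.

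To make this at most $\delta$, solve for $s$: it suffices that $s \ge \frac{k^2}{2\epsilon^2}\bigl((k+1)\ln m + \ln(2/\delta)\bigr)$, which is $\Omega\!\bigl(\tfrac{k^3}{\epsilon^2}\log\tfrac{m}{\delta}\bigr)$, matching the stated sample size. The main (minor) obstacle is choosing the union-bound carefully: naively one might try to take a union only over $y$ and handle $X$ some other way, but the cleanest route is to notice that the number of candidate solutions $X$ is $\binom{m}{k}$, whose logarithm contributes the extra factor of $k$ beyond the $k^2/\epsilon^2$ Hoeffding requirement and is exactly what produces the $k^3$ in the sample complexity. No structural properties of proportionality are needed beyond the fact that $R(\N,X,y)$ is a \emph{set} determined by $(X,y)$, so membership of each sample point in $R$ is an independent Bernoulli event, which is what makes the straightforward Chernoff/Hoeffding + union bound argument go through.
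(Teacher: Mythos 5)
Your proposal is correct and follows essentially the same route as the paper: Hoeffding's inequality for each fixed pair $(X,y)$ at accuracy $\epsilon/k$, followed by a union bound over the $m\binom{m}{k}$ possible pairs, which is exactly where the extra factor of $k$ in the $k^3/\epsilon^2$ sample complexity comes from. Your write-up is somewhat more explicit about the final arithmetic and about sampling with versus without replacement, but the argument is the same.
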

\begin{proof}
	Recall that $N$ is a random sample of $\N$. Hoeffding's inequality implies that for any fixed $(X,y)$, a sample of size $|N| = O\left( \frac{1}{\hat{\epsilon}^2} \log \frac{1}{\hat{\delta}} \right)$ is sufficient to achieve
$$  \left| \frac{| R(N, X,y) |}{|N|} - \frac{| R(\N, X, y)  |}{|\N|} \right| \le \hat{\epsilon}$$ 
with probability at least $1-\hat{\delta}$. Note that there are $q = m {m \choose k}$ possible choices of $(X,y)$ over which we take the union bound. Setting $\delta = \frac{\hat{\delta}}{q}$, and $\epsilon = \frac{\hat{\epsilon}}{k}$ is sufficient for the union bound to yield the theorem statement.
\end{proof}

In order to apply the above theorem, we say that a solution $X$ is $\rho$-proportional to $(1+\epsilon)$-deviations if for all $y \in \M$ and for all $S \subseteq \N$ where $|S| \geq (1+\epsilon) \frac{n}{k}$, there exists some $i \in S$ such that $\rho \cdot d(i, y) \geq D_i(X)$. Note that if $X$ is $\rho$-proportional to 1-deviations, it is simply $\rho$-proportional. We immediately have the following: 

\begin{corollary}
Let $N \subset \N$ be a uniform random sample of size $|N| = \Omega \left(\frac{k^3}{\epsilon^2} \ln \frac{m}{\delta}\right)$. Suppose $X \subseteq M$ with $|X| = k$ is $\rho$-proportional with respect to $N$. Then with probability at least $1-\delta$, $X$ is $\rho$-proportional to $(1+\epsilon)$-deviations with respect to $\N$.
\end{corollary}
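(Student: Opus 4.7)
The plan is to derive this corollary as a direct consequence of Theorem~\ref{thm:sampling}. The key observation is a reformulation of $\rho$-proportionality in terms of the sets $R(\cdot, X, y)$: for any ground set $A$, the solution $X$ is $\rho$-proportional to $(1+\epsilon)$-deviations with respect to $A$ if and only if for every candidate center $y \in \M$ we have $|R(A, X, y)| < (1+\epsilon) |A|/k$, because $R(A, X, y)$ is precisely the set of points that would strictly benefit (by factor $\rho$) from switching to $y$. The case $\epsilon = 0$ recovers the standard definition.

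First I would invoke Theorem~\ref{thm:sampling} for the chosen $\epsilon, \delta$ and the stated sample size $|N| = \Omega\!\left(\frac{k^3}{\epsilon^2}\log \frac{m}{\delta}\right)$. On the event $\mathcal{E}$ of probability at least $1-\delta$ guaranteed by that theorem, we have
$$\left|\frac{|R(N,X,y)|}{|N|} - \frac{|R(\N,X,y)|}{|\N|}\right| \le \frac{\epsilon}{k}$$
simultaneously for every pair $(X, y)$ with $X \subseteq \M$, $|X| = k$, and $y \in \M$. Next I would translate the hypothesis that $X$ is $\rho$-proportional with respect to $N$ into the statement that $|R(N,X,y)|/|N| < 1/k$ for all $y \in \M$, using the equivalent characterization given just after Definition~\ref{def:core} (extended to the $\rho$-approximate version).

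Combining these two facts on $\mathcal{E}$ gives $|R(\N,X,y)|/|\N| < (1+\epsilon)/k$ for every $y \in \M$. To finish, I would argue by contrapositive: if $X$ were not $\rho$-proportional to $(1+\epsilon)$-deviations on $\N$, there would exist $y \in \M$ and $S \subseteq \N$ with $|S| \geq (1+\epsilon) n/k$ such that $\rho \cdot d(i, y) < D_i(X)$ for all $i \in S$. By definition of $R$, this forces $S \subseteq R(\N, X, y)$ and hence $|R(\N,X,y)|/|\N| \geq (1+\epsilon)/k$, contradicting the bound derived above. Thus, on $\mathcal{E}$, $X$ is $\rho$-proportional to $(1+\epsilon)$-deviations with respect to $\N$.

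There is no substantial obstacle here once Theorem~\ref{thm:sampling} is in hand; the only mild annoyance is the boundary bookkeeping caused by the ceilings $\lceil n/k \rceil$ and $\lceil |N|/k \rceil$ in the definition of blocking coalition, which can be absorbed into the constant hidden in the $\Omega(\cdot)$ sample size (equivalently, by rescaling $\epsilon$ by a constant factor). The argument is essentially a two-line corollary: rewrite proportionality as a fraction-of-points statement, apply the uniform deviation bound from Theorem~\ref{thm:sampling}, and translate back.
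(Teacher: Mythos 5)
Your proof is correct and takes essentially the same route as the paper, which presents this corollary as an immediate consequence of Theorem~\ref{thm:sampling}: rewrite $\rho$-proportionality (to $(1+\epsilon)$-deviations) as the condition that $|R(\cdot,X,y)|$ is below the relevant fraction of the ground set for every $y \in \M$, then transfer the bound from $N$ to $\N$ using the uniform $\epsilon/k$ deviation guarantee. The only difference is that you make the contrapositive step and the ceiling bookkeeping explicit, which the paper leaves implicit.
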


\subsection{Linear Time Implementation}
We now consider how to take advantage of Theorem~\ref{thm:sampling} to optimize Algorithm~\ref{alg:ball} and Algorithm~\ref{alg:local}. First, note that Algorithm~\ref{alg:ball} takes $\tilde{O}(m n)$ time, which is quadratic in input size. 
A corollary of Theorem~\ref{thm:sampling} is that we can approximately implement Algorithm~\ref{alg:ball} in nearly linear time, comparable to the running time of the standard $k$-means heuristic.

\begin{corollary} 
\label{cor:fastGreedy}
Algorithm~\ref{alg:ball}, when run on $\M$ and a random sample $N \subseteq \N$ of size $|N| = \tilde{\Theta}\left(\frac{k^3}{\epsilon^2}\right)$, provides a solution that is $(1+\sqrt{2})$-proportional to $(1+\epsilon)$-deviations with high probability in $\tilde{O}\left( \frac{k^3}{\epsilon^2} m \right)$ time.
\end{corollary}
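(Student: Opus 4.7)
The plan is to run Algorithm~\ref{alg:ball} on the random sample $N$ rather than on the full data set $\N$, and then invoke Theorem~\ref{thm:sampling} to lift the guarantee from $N$ back to $\N$. First, I would apply Theorem~\ref{thm:ballGrowing} to the instance $(N,\M)$: the returned set $X$ is $(1+\sqrt{2})$-proportional with respect to $N$. Equivalently, for every $y \in \M$, the set $R(N,X,y) = \{i \in N : D_i(X) > (1+\sqrt{2})\, d(i,y)\}$ has size strictly less than $\lceil |N|/k \rceil$, so $|R(N,X,y)|/|N| < 1/k + 1/|N|$.

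Next, I would invoke Theorem~\ref{thm:sampling} with $\rho = 1+\sqrt{2}$. For a sample of size $|N| = \Omega((k^3/\epsilon^2) \log(m/\delta))$, the theorem says that with probability at least $1-\delta$ the sample and true fractions of $R$ agree within $\epsilon/k$ \emph{uniformly} over all pairs $(X',y)$ with $|X'|=k$. In particular, applying this to the specific $X$ returned by the algorithm yields $|R(\N,X,y)|/|\N| < (1+\epsilon)/k$ for every $y \in \M$, after absorbing the negligible $1/|N|$ slack (since $|N| \gg k/\epsilon$) into the constant hidden in the sample-size bound. By the definition of $R$, this means no deviating center $y \in \M$ attracts a set of size at least $\lceil (1+\epsilon) n/k \rceil$ whose members all strictly prefer $y$ by a factor of $1+\sqrt{2}$ over $X$; that is precisely the condition that $X$ is $(1+\sqrt{2})$-proportional to $(1+\epsilon)$-deviations with respect to $\N$.

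For the running time, Algorithm~\ref{alg:ball} runs in $\tilde{O}(m \cdot |N|)$ time when implemented with a priority queue over the $m|N|$ candidate capture events. Substituting $|N| = \tilde{\Theta}(k^3/\epsilon^2)$ yields the claimed $\tilde{O}(m k^3/\epsilon^2)$ total time, which is essentially linear in $m$ for constant $k$ and $\epsilon$.

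The main subtlety is that the solution $X$ is itself a (random) function of the sample $N$, so one cannot naively apply a single Hoeffding bound to it; that would require conditioning on $X$ before drawing $N$. The whole point of stating Theorem~\ref{thm:sampling} with a union bound over all $(X',y)$ is that the resulting uniform guarantee holds \emph{simultaneously} for every candidate $X'$, and therefore also for the data-dependent $X$ produced by the algorithm. This is the only nontrivial step in the argument; the remainder is bookkeeping on the ceiling arithmetic and the sample-size constants.
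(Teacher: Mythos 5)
Your proof is correct and follows exactly the route the paper intends (the paper leaves this corollary unproved, deriving it implicitly from Theorem~\ref{thm:ballGrowing} applied to the sample together with Theorem~\ref{thm:sampling}). Your explicit remark that the union bound over all $(X,y)$ pairs is what licenses applying the concentration guarantee to the sample-dependent output $X$ is precisely the ``important idea'' the paper flags in its discussion of Theorem~\ref{thm:sampling}.
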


We also get a substantial speedup for our Local Capture algorithm. Recall that Local Capture (Algorithm~\ref{alg:local}) is an iterative algorithm that takes a target value of $\rho$ as a parameter, and if it converges, returns a $\rho$-proportional clustering. Without sampling, each iteration of Algorithm~\ref{alg:local} takes $\tilde{O}(m n^2)$ time. Another corollary of Theorem~\ref{thm:sampling} is that it is sufficient to run the Local Capture on a random sample of $k^3/\epsilon^2$ out of the $n$ points in $\N$ in order to search for a clustering that is $\rho$-proportional with respect to $(1+\epsilon)$-deviations.

\subsection{Efficient Auditing}
Alternatively, one might still want to run a non-proportional clustering algorithm, and ask whether the solution produced happens to be proportional. We call this the \textit{Audit Problem}. Given $\N, \M,$ and $X \subset \N$ with $|X| \leq k$, find the minimum value of $\rho$ such that $X$ is $\rho$-proportional. 
It is not too hard to see that one can solve the Audit Problem exactly in $O((k+m)n)$ time by computing for each $y \in \M$, the quantity $\rho_y$, the $\lceil \frac{n}{k} \rceil$ largest value of $\frac{D_i(X)}{d(i, y)}$. We subsequently find the $y$ that maximizes $\rho_y$. Again, this takes quadratic time, which can be worse than the time taken to find the clustering itself.

Consider a slightly relaxed $(\epsilon, \delta)$-Audit Problem where we are asked to find the minimum value of $\rho$ such that $X$ is $\rho$-proportional to $(1+\epsilon)$-deviations with probability at least $1-\delta$. 
This problem can be efficiently solved by using a random sample $N \subseteq \N$ of points to conduct the audit.

\begin{corollary}
\label{cor:fastAudit}
The $(\epsilon, \delta)$-Audit Problem can be solved in $\tilde{O}\left( \left(k + m \right)\frac{k^3}{\epsilon^2} \right)$ time.
\end{corollary}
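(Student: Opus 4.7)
The plan is to run the exact audit algorithm from the paragraph preceding the corollary, but on a small uniform random sub-sample of $\N$, and then invoke Theorem~\ref{thm:sampling} to push the guarantee from the sample back to the full population.

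Concretely, I would first draw a uniform random sample $N \subseteq \N$ of size $|N| = \Theta\bigl(\tfrac{k^3}{\epsilon^2}\log\tfrac{m}{\delta}\bigr)$, the size demanded by Theorem~\ref{thm:sampling}. Next, I would precompute $D_i(X) = \min_{x \in X} d(i,x)$ for each $i \in N$ in $O(k|N|)$ time by scanning the $k$ open centers. Then, for each candidate deviating center $y \in \M$, I would use linear-time selection to compute $\tilde{\rho}_y$, defined as the $\lceil |N|/k \rceil$-th largest value of the ratio $D_i(X)/d(i,y)$ over $i \in N$; this costs $O(|N|)$ per $y$, for $O(m|N|)$ in total. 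Finally, I would return $\tilde{\rho} = \max_{y \in \M} \tilde{\rho}_y$. Summing yields total running time $O((k+m)|N|) = \tilde{O}\bigl((k+m)\, k^3/\epsilon^2\bigr)$, matching the stated bound.

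For correctness, the choice of $\tilde{\rho}$ ensures that for every $y \in \M$ the set $R(N, X, y) = \{i \in N : D_i(X) > \tilde{\rho}\cdot d(i,y)\}$ has cardinality strictly less than $\lceil |N|/k\rceil$, so $|R(N, X, y)|/|N| \leq 1/k$. Applying Theorem~\ref{thm:sampling} to the fixed $X$ at parameter $\rho = \tilde{\rho}$, with probability at least $1-\delta$ the bound $\bigl||R(N, X, y)|/|N| - |R(\N, X, y)|/|\N|\bigr| \leq \epsilon/k$ holds uniformly over all $y \in \M$, so $|R(\N, X, y)|/|\N| \leq (1+\epsilon)/k$ for every $y$. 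Unpacking the definition, no subset $S \subseteq \N$ of size at least $(1+\epsilon)\lceil n/k \rceil$ together with some deviating $y \in \M$ can witness $\tilde{\rho}\cdot d(i,y) < D_i(X)$ simultaneously for all $i \in S$; hence $X$ is $\tilde{\rho}$-proportional to $(1+\epsilon)$-deviations, certifying the required answer to the $(\epsilon, \delta)$-Audit Problem.

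The only real subtlety is that Theorem~\ref{thm:sampling} already delivers a uniform bound over all $(X, y)$ pairs, so a single random sample $N$ simultaneously certifies the fraction bound for every one of the $m$ deviating centers scanned by the audit; no additional union bound beyond what the theorem already pays for is required, which is precisely what keeps the sample size independent of $m$ up to the $\log m$ factor absorbed by $\tilde{O}$.
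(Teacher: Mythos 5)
Your proposal is correct and is essentially the paper's intended argument: the paper proves this corollary by running the exact audit (per-center linear-time selection of the $\lceil |N|/k\rceil$-th largest ratio $D_i(X)/d(i,y)$) on a uniform sample of the size prescribed by Theorem~\ref{thm:sampling} and transferring the fraction bound back to $\N$. The only cosmetic difference is that, with $X$ fixed, a union bound over the $m$ deviating centers alone would suffice (permitting a slightly smaller sample of size $\tilde{O}(k^2/\epsilon^2)$), but invoking the full uniform guarantee of Theorem~\ref{thm:sampling} as you do is exactly what the paper does and yields the stated running time.
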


\section{Implementations and Empirical Results}
\label{sec:experiments}
In this section, we study proportionality on real data taken from the UCI Machine Learning Repository~\cite{UCI}. We consider three qualitatively different data sets used for clustering: Iris, Diabetes, and KDD. For each data set, we only have a single set of points given as input, so we take $\N = \M$ to be the set of all points in the data set. We use the standard Euclidean L2 distance.

\textbf{Iris.} This data set contains information about the petal dimensions of three different species of iris flowers. There are 50 samples of each species.

\textbf{Diabetes.} The Pima Indians Diabetes data set contains information about 768 diabetes patients, recording features like glucose, blood pressure, age and skin thickness. 

\textbf{KDD.} The KDD cup 1999 data set contains information about sequences of TCP packets. Each packet is classified as normal or one of twenty-two types of intrusions. Of these 23 classes, normal, ``neptune'', and ``smurf'' account for 98.3$\%$ of the data. The data set contains 18 million samples; we work with a subsample of 100,000 points.\footnote{We run $k$-means++ on this entire 100,000 point sample. For efficiency, we run our Local Capture algorithm by further sampling 5,000 points uniformly at random to treat as $\N$ and sampling 400 points via the $k$-means++ initialization to treat as $\M$. For the sake of a fair comparison, we generate a different sample of 400 centers using the $k$-means++ initialization that we use to determine the value of $\rho$ we report for both Local Capture and the $k$-means++ algorithm. The $k$-means objective is measured on the original 100,000 points for both algorithms.}

\subsection{Proportionality and $k$-means Objective Tradeoff}

We compare Greedy Capture (Algorithm~\ref{alg:ball}) and Local Capture (Algorithm~\ref{alg:local}) with the $k$-means++ algorithm (Lloyd's algorithm for $k$-means minimization with the $k$-means++ initialization~\cite{kmeans++}) for a range of values of $k$. For the Iris data set, Local Capture  and $k$-means++ always find an exact proportional solution (Figure~\ref{fig:IrisRho}), and have comparable $k$-means objectives (Figure~\ref{fig:IriskMeans}). The Iris data set is very simple with three natural clusters, and validates the intuition that proportionality and the $k$-means objective are not always opposed.

The Diabetes data set is larger and more complex. As shown in Figure~\ref{fig:DiabetesRho}, $k$-means++ no longer always finds an exact proportional solution. Local Capture always finds a better than 1.01-proportional solution. As shown in Figure~\ref{fig:DiabeteskMeans}, the $k$-means objectives of the solutions are separated, although generally on the same order of magnitude.

\begin{figure*}[t!]
    \centering
    \begin{subfigure}[t]{0.33\linewidth}
        \includegraphics[width=\linewidth]{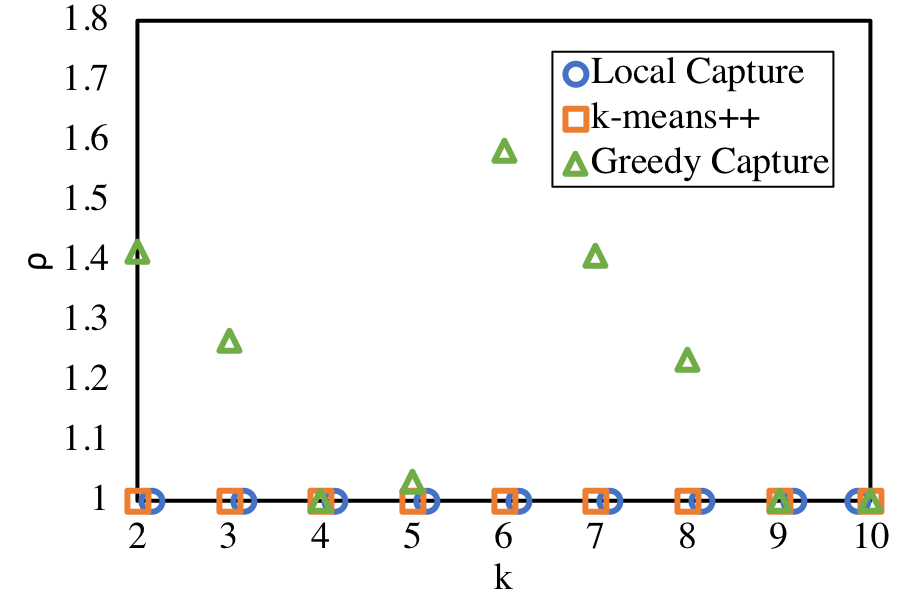}
        \caption{Iris}
        \label{fig:IrisRho}
    \end{subfigure}%
    ~
    \begin{subfigure}[t]{.33\linewidth}
        \includegraphics[width=\linewidth]{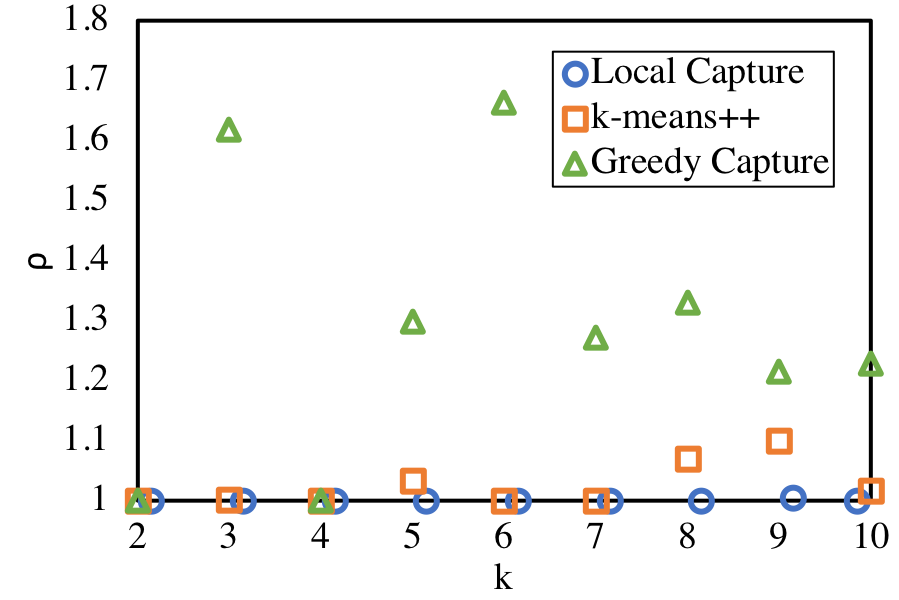}
        \caption{Diabetes}
        \label{fig:DiabetesRho}
    \end{subfigure}%
    ~
    	 \begin{subfigure}[t]{.33\linewidth}
        \includegraphics[width=\linewidth]{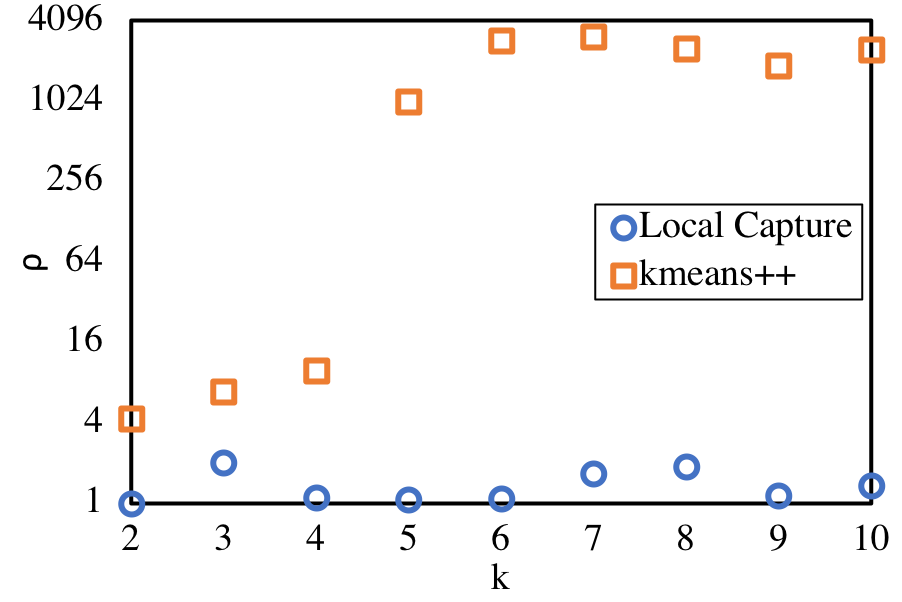}
        \caption{KDD, geometric scale}
        \label{fig:kddRho}
    \end{subfigure}%
    \caption{Minimum $\rho$ such that the solution is $\rho$-proportional}.
    \label{fig:Rho}
\end{figure*}
    
\begin{figure*}[t!]
    \centering
    \begin{subfigure}[t]{0.33\linewidth}
        \includegraphics[width=\linewidth]{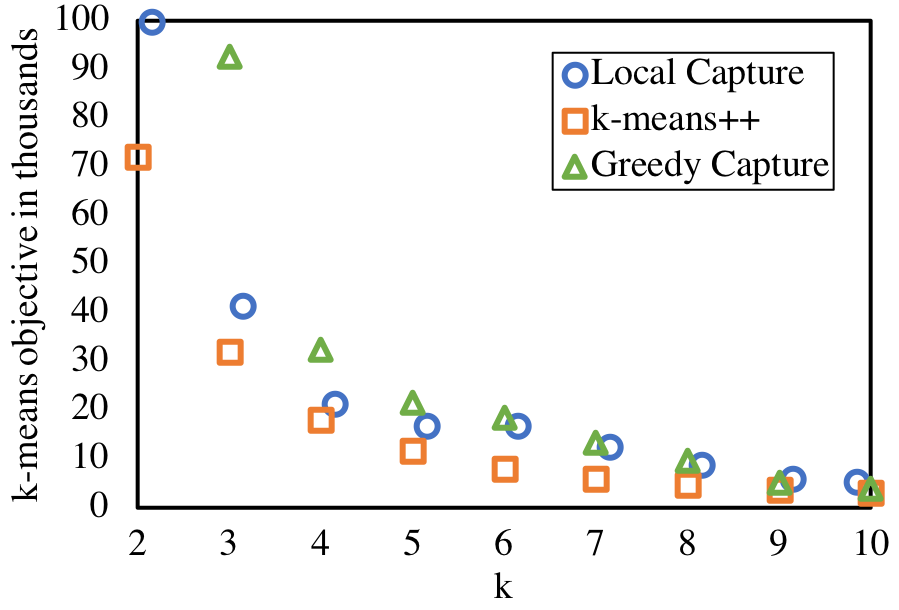}
        \caption{Iris}
        \label{fig:IriskMeans}
    \end{subfigure}%
    ~
    \begin{subfigure}[t]{0.33\linewidth}
        \includegraphics[width=\linewidth]{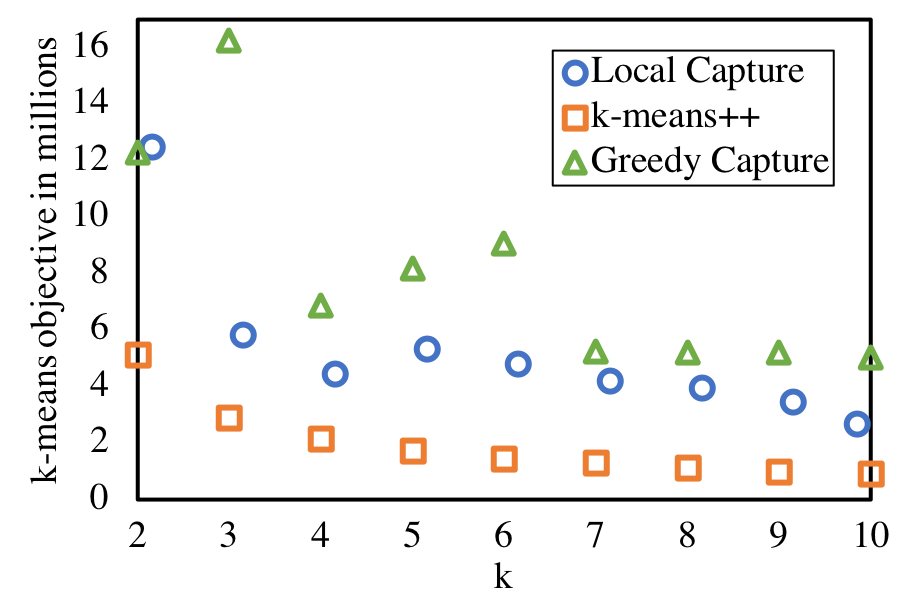}
        \caption{Diabetes}
        \label{fig:DiabeteskMeans}
    \end{subfigure}%
    ~ 
    \begin{subfigure}[t]{0.33\linewidth}
        \includegraphics[width=\linewidth]{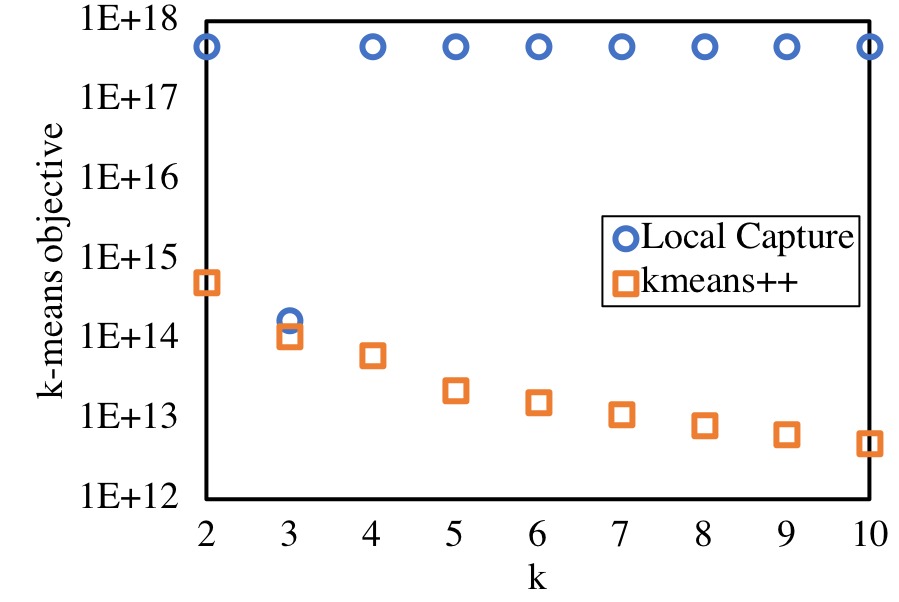}
        \caption{KDD, geometric scale}
        \label{fig:kddObjective}
    \end{subfigure}
    \caption{$k$-means objective}
\end{figure*}

For the KDD data set, proportionality and the $k$-means object appear to be in conflict. Greedy Capture's performance is comparable to Local Capture on KDD, so we omit it for clarity. In Figures~\ref{fig:kddRho} and~\ref{fig:kddObjective}, note that the gap between $\rho$ and the $k$-means objective for the $k$-means++ and Local Capture algorithms is between three and four orders of magnitude. We suspect this is due to the presence of significant outliers in the KDD data set. This is in keeping with the theoretical impossibility of simultaneously approximating the optima on both objectives, and demonstrates that this tension arises in practice as well as theory.


\subsection{Proportionality and Low $k$-means Objective}
Note that if one is allowed to use $2k$ centers when $k$ is given as input, one can trivially achieve the proportionality of Local Capture and the $k$-means objective of the $k$-means++ algorithm by taking the union of the two solutions. Thinking in this way leads to a different way of quantifying the tradeoff between proportionality and the $k$-means objective: Given an approximately proportional solution, how many \textit{extra} centers are necessary to get comparable $k$-means objective as the $k$-means++ algorithm? For a given data set, the answer is a value between 0 and $k$, where larger numbers indicate more incompatibility, and lower numbers indicate less incompatibility.  

To answer this question, we compute the union of centers found by Local Capture and the $k$-means++ algorithm. We then greedily remove centers as long as doing so does not increase the minimum $\rho$ such that the solution is $\rho$-proportional (defined on $k$, not $2k$) by more than a multiplicative factor of $\alpha$, and does not increase the $k$-means objective by more than a multiplicative factor $\beta$.

On the KDD dataset, we set $\alpha = 1.2$ and $\beta = 1.5$, so the proportionality of the result is within 1.2 of Local Capture in Figure~\ref{fig:kddRho}, and the $k$-means objective is within 1.5 of $k$-means++ in Figure~\ref{fig:kddObjective}.  We observe that this heuristic uses at most $3$ extra centers for any $k \le 10$. So while there is real tension between proportionality and the $k$-means objective, this tension is still not maximal. In the worst case, one might need to add $k$ centers to a proportional solution to compete with the $k$-means objective of the $k$-means++ algorithm, but in practice we find that we need at most $3$ for $k \le 10$.

\section{Conclusion and Open Directions}
\label{sec:open}
We have introduced proportionality as a fair solution concept for centroid clustering. Although exact proportional solutions may not exist, we gave efficient algorithms for computing approximate proportional solutions, and considered constrained optimization and sampling for further applications. Finally, we studied proportionality on real data and observed a data dependent tradeoff between proportionality and the $k$-means objective. While this tradeoff is in some sense a negative result, it also demonstrates that proportionality as a fairness guarantee matters in the sense that it meaningfully constrains the space of solutions. 

We have shown that $\rho$-proportional solutions need not exist for $\rho < 2$, and always exist for $\rho \geq 1+\sqrt{2}$. Closing this approximability gap is one outstanding question. Another is whether there is a more efficient and easily interpretable algorithm for optimizing total cost subject to proportionality, as our approach in Section~\ref{sec:lp} requires solving a linear program on the entire data set. We would ideally like a more efficient and easily interpretable primal-dual or local search type algorithm. More generally, what other fair solution concepts for clustering should be considered alongside proportionality, and can we characterize their relative advantages and disadvantages? Finally, can the idea of proportionality as a group fairness concept be adapted for supervised learning tasks like classification and regression? 

\section*{Acknowledgements}
Brandon Fain is supported by NSF grants CCF-1408784 and IIS-1447554. Kamesh Munagala is supported by NSF grants CCF-1408784, CCF-1637397, and IIS-1447554; and research awards from Adobe and Facebook.

\bibliographystyle{abbrv}
\bibliography{refs.bib}



\end{document}